\newtheorem{lemma}{Lemma}
\newtheorem{proposition}{Proposition}
\newtheorem{corollary}{Corollary}
\newtheorem{assumption}{Assumption}
\newcommand{\out}[1]{}
\def\eqref#1{equation~\ref{#1}}
\def\1{\bm{1}}
\DeclareMathAlphabet{\mathsfit}{\encodingdefault}{\sfdefault}{m}{sl}
\SetMathAlphabet{\mathsfit}{bold}{\encodingdefault}{\sfdefault}{bx}{n}
\newcommand{\ours}{Diffusion-BBO}
\newcommand\dif{\mathop{}\!\mathrm{d}}
\DeclareMathOperator*{\argmax}{argmax}
\def \rT{\mathrm{T}}
\newcommand*\R[0]{\mathbb{R}}
\newcommand*{\E}{\ensuremath{{\mathbb{E}}}}
\newcommand\numberthis{\addtocounter{equation}{1}\tag{\theequation}}
\newcommand*\lrn[1]{\left\|#1\right\|}
\title{Diffusion-BBO:  Diffusion-Based Inverse Modeling for \\ Online Black-Box Optimization}
\author{ {\hspace{0.1mm}Dongxia Wu}\\
    \texttt{University of California San Diego}\\
    \texttt{La Jolla, CA}\\
	\texttt{dowu@ucsd.edu} \\
	\And{\hspace{0.1mm}Nikki Lijing Kuang}\\
    \texttt{University of California San Diego}\\
    \texttt{La Jolla, CA}\\
	\texttt{l1kuang@ucsd.edu} \\
    \And{\hspace{0.1mm}Ruijia Niu}\\
    \texttt{University of California San Diego}\\
    \texttt{La Jolla, CA}\\
	\texttt{rniu@ucsd.edu} \\
    \And{\hspace{0.1mm}Yi-An Ma}\\
    \texttt{University of California San Diego}\\
    \texttt{La Jolla, CA}\\
	\texttt{yianma@ucsd.edu} \\
    \And{\hspace{0.1mm}Rose Yu}\\
    \texttt{University of California San Diego}\\
    \texttt{La Jolla, CA}\\
    \texttt{roseyu@ucsd.edu}\\
}
\date{}
\newcommand{\blfootnote}[1]{\begingroup%
\renewcommand\thefootnote{}\footnotetext{#1}%
\addtocounter{footnote}{-1}%
\endgroup}
\begin{document}
\maketitle

\blfootnote{Preprint. Under Review.}

\begin{abstract}
\begin{enumerate}
Online black-box optimization (BBO) aims to optimize an objective function by iteratively querying a black-box oracle in a sample-efficient way.
While prior studies focus on forward approaches such as Gaussian Processes (GPs) to learn a surrogate model for the unknown objective function, they struggle with steering clear of out-of-distribution and invalid designs in scientific discovery tasks.
Recently, inverse modeling approaches that map the objective space to the design space with conditional diffusion models have demonstrated impressive capability in learning the data manifold.
However, these approaches proceed in an \textit{offline} fashion with pre-collected data. 
How to design inverse approaches for \textit{online} BBO to \textit{actively} query new data and improve the sample efficiency remains an open question.
In this work, we propose \ours{}, a sample-efficient online BBO framework leveraging the conditional
diffusion model as the \textit{inverse} surrogate model.
\ours{} employs a novel acquisition function \textit{Uncertainty-aware Exploration} (UaE) to propose scores in the objective space for conditional sampling. 
We theoretically prove that \ours{} with UaE achieves a near-optimal solution for online BBO.
We also empirically demonstrate that \ours{} with UaE outperforms existing online BBO baselines across 6 scientific discovery tasks.

\end{enumerate}
\end{abstract}

\section{Introduction}
Practical problems in scientific discovery often involve optimizing a black-box objective function that is expensive to evaluate, seen in fields such as superconducting material design \citep{fannjiang2020autofocused},
DNA sequence optimization \citep{daniel2015atgme}, and molecular design \citep{sanchez2018inverse}. How to achieve a near-optimal solution while minimizing function evaluations is thus a major task in black-box optimization (BBO). To improve sample efficiency, prior works in BBO have largely focused on the \textit{online} setting where the algorithm can iteratively select candidates in the design space and query the black-box function for evaluation \citep{turner2021bayesian, zhang2021unifying, hebbal2019bayesian, mockus1974bayesian}. Most existing algorithms belong to the class of forward approaches, including Bayesian optimization (BO) \citep{kushner1964new, mockus1974bayesian, wu2023disentangled, frazier2018tutorial}, bandit algorithms \citep{agrawal2012analysis, karbasi2023langevin}, conditional sampling approaches \citep{brookes2019conditioning, gruver2024protein, stanton2022accelerating}, in which a surrogate model is built to iteratively approximate and optimize the black-box function.

However, these approaches often fail in scenarios where valid designs, such as valid protein sequences or molecular structures \citep{kumar2020model}, are confined in a small subspace. 
Such optimization problems become challenging as the optimizer needs to eliminate out-of-distribution or invalid designs.
Recently, a novel set of methods, termed \textit{inverse approaches}, have been proposed to address this issue \citep{kumar2020model,krishnamoorthy2023diffusion,kim2024bootstrapped}. Specifically, an inverse surrogate model is constructed to learn a mapping from the objective space back to the design space. 
By leveraging the state-of-the-art generative models, such as diffusion models \citep{song2020score}, these approaches can effectively learn the data distribution in the design space and facilitate optimization within the data manifold \citep{kong2024diffusion}. 
Promising performance has been achieved in \textit{offline} BBO settings \citep{kumar2020model, lu2023degradation, wang2018offline}, assuming an access to a fixed pre-collected dataset. 

Despite the advancements of inverse approaches in offline settings, extending existing methods to \textit{online} BBO remains challenging due to two unresolved issues: (1) the lack of rigorous uncertainty quantification over inverse surrogate models, and (2) the absence of principled methods to leverage the model uncertainty for acquisition function design in the objective space. These limitations hinder the application of inverse approaches for online BBO, where enhancing sample efficiency is crucial as data acquisition is expensive in tasks such as scientific discovery. 
In this paper, we propose \ours{} to tackle the above issues. It is a sample-efficient online BBO framework that leverages the conditional diffusion model as the inverse surrogate model. 
\ours{} consists of a novel acquisition function \textit{Uncertainty-aware Exploration} (UaE), which 
strategically propose the objective scores within the training data distribution for conditional sampling in the design space.
We summarize our main contributions as follows:
\begin{itemize}
    \item We provide an uncertainty decomposition into epistemic uncertainty and aleatoric uncertainty for conditional diffusion models. We rigorously analyze how uncertainty propagates throughout the denoising process of the conditional diffusion model. 
    \item We design the acquisition function UaE for \ours{} based on the quantified uncertainty of conditional diffusion models. 
    Theoretically, we prove that 
    UaE achieves a balance between targeting higher objective values and minimizing epistemic uncertainty, leading to optimal optimization outcomes.
    \item We demonstrate that \ours{} with UaE achieves state-of-the-art online BBO performance across 6 scientific discovery tasks in Design-Bench and molecular discovery.
\end{itemize}

\section{Related Work}
\label{sec:related}
\paragraph{Online Black-box Optimization.}

For online Black-box Optimization (BBO), algorithms are designed to iteratively query the oracle function to obtain new training data \citep{turner2021bayesian, zhang2021unifying, hebbal2019bayesian, mockus1974bayesian}. In this setting, most existing algorithms belong to the class of forward approaches, including Bayesian optimization (BO) \citep{kushner1964new, mockus1974bayesian, wu2023disentangled, frazier2018tutorial}, bandit algorithms \citep{agrawal2012analysis, karbasi2023langevin}, and conditional sampling approaches \citep{brookes2019conditioning, gruver2024protein, stanton2022accelerating}. 
\citet{liu2024large} further integrate LLM capabilities into the BO framework to enable zero-shot warm-starting and enhance surrogate
modeling and candidate sampling.
Forward approaches build a surrogate model to approximate and optimize the black-box objective function. However, these approaches can struggle with capturing the data manifold in the design space and avoiding out-of-distribution and invalid inputs \citep{kumar2020model}. \citet{song2022general, zhang2021unifying} proposed Likelihood-free BO using likelihood-free inference to extend BO to a broader class of models and utilities. This approach directly models the acquisition function without separately performing inference with a surrogate model. However, there is a risk that the acquisition function becomes overconfident. 

\paragraph{Offline Black-box Optimization.}

Recent studies address BBO in the offline setting using pre-collected datasets without actively querying the oracle function. Existing approaches can be categorized into two types. The first is the forward approach, which aims to improve the likelihood of finding a good design with a surrogate model \citep{yu2021roma, fu2021offline, trabucco2021conservative, dao2024boosting}. The second is the inverse approach, which models a distribution conditioned on the target function score. This distribution can be estimated through adaptive step-size updates in gradient-based optimization via reinforcement learning \citep{chemingui2024offline} or zero-sum games \citep{fannjiang2020autofocused}, or autoregressive modeling \citep{mashkaria2023generative}.
Our work builds on recent progress in inverse approaches, leveraging conditional diffusion models to better learn the data manifold in the design space \citep{krishnamoorthy2023diffusion, kong2024diffusion, li2024diffusion}. In contrast to these recent works, we propose a sample-efficient online BBO algorithm with an acquisition function based on conditional diffusion models.


\paragraph{Diffusion Models.} As an emerging class of generative models with strong expressiveness, diffusion models \citep{sohl2015deep, song2020score} have been successfully deployed across various domains including image generation \citep{rombach2022high}, reinforcement learning \citep{wang2022diffusion}, robotics \citep{chi2023diffusion}, etc. Notably, through the formulation of stochastic differential equations (SDEs), \citep{song2020score} provides a unified continuous-time score-based framework for distinctive classes of diffusion models. 
To steer the generation toward high-quality samples with desired properties, it is important to guide the backward data-generation process using task-specific information.
Hence, different types of guidance are studied in prior works \citep{bansal2023universal, nichol2021glide, zhang2023adding}, including classifier guidance \citep{dhariwal2021diffusion} where the classifier is trained externally, and classifier-free guidance \citep{ho2022classifier}, in which the classifier is implicitly specified. 
In this work, we employ classifier-free guidance to avoid training a separate classifier model, thereby enabling feasible uncertainty quantification in conditional diffusion models.

\paragraph{Uncertainty Quantification.}  
Uncertainty quantification (UQ) often relies on probabilistic modeling, with Bayesian approximation and ensemble learning being two popular types of approaches. 
Bayesian Neural Networks (BNNs) \citep{mackay1992practical,neal2012bayesian,kendall2017uncertainties, zhang2018advances} employ variational inference to sample model weights from a tractable distribution and estimate uncertainty through sample variance. When training large-scale models, 
Monte Carlo dropout \citep{srivastava2014dropout} offers a cost-effective alternative by approximating BNNs during inference \citep{gal2016dropout}.
On the other hand, deep ensembles \citep{lakshminarayanan2017simple} train multiple NNs with different initial weights to gauge uncertainty via model variance. Recent efforts incorporate ensembling techniques in generative models to decompose uncertainty into aleatoric and epistemic components \citep{valdenegro2022deeper, ekmekci2023quantifying}. To further improve the scalability of deep ensembles, \citep{chan2024hyper} proposed hyper-diffusion to quantify the uncertainty with a single diffusion model. In comparison, we take one step further by utilizing the quantified uncertainty of conditional diffusion models to solve the black-box optimization problem as a downstream task.

\section{Preliminaries}
\subsection{Problem Formulation}

Let $f: \mathcal{X} \rightarrow \mathbb{R}$ denote the unknown ground-truth black-box function that evaluates the objective score of any design point $\boldsymbol{x}$, with $\mathcal{X} \subseteq \mathbb{R}^d$. Our goal is to find the optimal design $\boldsymbol{x}^*$ that maximizes $f$:
\begin{equation}     
    \boldsymbol{x}^* \in \argmax\nolimits_{\boldsymbol{x} \in \mathcal{X}} f(\mathbf{\boldsymbol{x}}).
     \label{eqn:obj}
\end{equation}
We are interested in the \textit{online} BBO setting in which $f$ is expensive to evaluate and the number of evaluations is limited. 
With a fixed query budget of $K$ and batch size $N$, 
 we iteratively query $f$ with $N$ new inputs in each batch, and update the surrogate model of $f$ based on observed outputs within $K$ iterations. A key concept in online BBO is the acquisition function, which guides the selection of new design points by balancing exploration and exploitation. This function aims to identify high-performing designs, thereby improving the sample efficiency for online BBO.

 \subsection{Conditional Diffusion Model}

 Diffusion Models \citep{sohl2015deep, song2020score} are probabilistic generative models that learn distributions through a denoising process. These models consist of three components: a forward diffusion process that produces a series of noisy samples by adding Gaussian noise, a reverse process to reconstruct the original data samples from the noise, and a sampling procedure to generate new data samples from the learned distribution. Let the original sample be $\mathbf{x}_0$ and $t$ be the diffusion step. For conditional diffusion models, the conditioning variable $y$ resides in the same objective space as $f(x)$. It is added to both the forward process as $q\left(\boldsymbol{x}_{t} | \boldsymbol{x}_{t-1}, y\right)$ and the reverse process as $p_\theta\left( \boldsymbol{x}_{t-1} \mid \boldsymbol{x}_t, y\right), ~\forall t \in [T]$. The reverse process begins with the standard Gaussian distribution $p(\boldsymbol{x}_T) = \mathcal{N}(\boldsymbol{0}, \boldsymbol{I})$, and denoises $\boldsymbol{x}_t$ to recover $\boldsymbol{x}_0$ through the following Markov chain with reverse transitions:
\begin{align*}
    &p_\theta\left(\boldsymbol{x}_{0: T} | y\right)
 = p(\boldsymbol{x}_T) \prod\nolimits_{t=1}^T p_\theta\left(\boldsymbol{x}_{t-1} \mid \boldsymbol{x}_t, y\right), ~~  \boldsymbol{x}_T \sim \mathcal{N}(\boldsymbol{0}, \boldsymbol{I}),\\
    &p_\theta\left( \boldsymbol{x}_{t-1} \mid \boldsymbol{x}_t, y\right) =\mathcal{N}\left(\boldsymbol{x}_{t-1} ; \mu_\theta(\boldsymbol{x}_t, t, y), \Sigma_\theta (\boldsymbol{x}_t, t, y) \right).
\end{align*}
During training, $\Sigma_\theta$ is empirically fixed, 
and $\mu_\theta$ is reparametrized by a trainable denoise function $\boldsymbol{\epsilon}_\theta\left(\boldsymbol{x}_t, t, y\right)$. This function estimates the noise vector $\epsilon$ added to the input $\boldsymbol{x}_t$, and is trained by minimizing a reweighted version of the evidence lower bound (ELBO):

\begin{equation}        
    \mathcal{L}_{\mathrm{dif}}=\mathbb{E}_{\boldsymbol{x}_0 \sim q(\boldsymbol{x}), y, \boldsymbol{\epsilon} \sim \mathcal{N}(0, \boldsymbol{I}), t \sim \mathcal{U}(0, T), \boldsymbol{x}_t \sim q(\boldsymbol{x}_t \mid \boldsymbol{x}_0, y)}\left[w\left(t\right)\left\|\boldsymbol{\epsilon}-\boldsymbol{\epsilon}_\theta\left(\boldsymbol{x}_t, t, y\right)\right\|_2^2\right].
    \label{eqn:loss}
\end{equation}

Notably, the objective function in \Cref{eqn:loss} \citep{ho2020denoising} resembles denoising score matching across all time steps $t$. It estimates the gradient of the log probability density of the noisy data (i.e. score function): $\boldsymbol{\epsilon}_\theta\left(\boldsymbol{x}_t, t, y\right) \approx - \sigma_t \nabla_{\boldsymbol{x}} \log p(\boldsymbol{x} \mid y)$. We further denote the score function as $s_\theta(\boldsymbol{x}_t, y, t) := -\boldsymbol{\epsilon}_\theta\left(\boldsymbol{x}_t, t, y\right) / \sigma_t$.

\section{\ours{}}
\allowdisplaybreaks

In this section, we present the \ours{} framework, followed by the details of training its conditional diffusion model in the Bayesian setting.

\begin{table}[t]
\centering
\begin{tabular}{c|c|c|c}
\bottomrule[1.5pt]
\textbf{Problem Setting} & \textbf{Surrogate Model Type} & \textbf{Acquisition Function} & \textbf{Method} \\ \midrule
\multirow{2}{*}{\centering Offline BBO} 
    & Forward Model: $p(y|x,\mathcal{D})$ & $\times$ &  \citeauthor{yu2021roma, fu2021offline, dao2024boosting} \\ 
    & Inverse Model: $p(x|y,\mathcal{D})$ & $\times$ & \citeauthor{chemingui2024offline,kong2024diffusion, li2024diffusion}\\ 
    \hline
\multirow{2}{*}{\centering Online BBO} 
    & Forward Model: $p(y|x,\mathcal{D})$ & $\alpha(x,\mathcal{D})$ & \citeauthor{frazier2018tutorial, gruver2024protein, song2022general} \\ 
    & Inverse Model: $p(x|y,\mathcal{D})$ & $\alpha(y,\mathcal{D})$ & Our method: \ours{} \\ 
\bottomrule[1.5pt]
\end{tabular}
\vspace{3mm}
\caption{Overview of black-box optimization settings, highlighting differences between \ours{} and existing methods in terms of problem setting, surrogate modeling type, and acquisition function. Note that offline BBO and online BBO are distinct problem settings. We compare \ours{} only with other online BBO methods. Full discussion of the existing methods is provided in \Cref{sec:related}.}
\label{tab:bbo_comparison}

\end{table}

\begin{algorithm}[b]
\SetAlgoLined
\KwIn{Initial dataset $\mathcal{D} = \{\mathbf{x}, y\}$, total number of iterations $K$, candidate feasible range $C$, oracle function $f(\cdot)$, batch size $N$}
\textbf{Initialization}: Conditional diffusion model $p_\theta(\mathbf{x}|y)$\\
\For{$k =1,2, \cdots K$} {
    Train the conditional diffusion model with $\mathcal{D}$ \\
    Construct a candidate set $\mathcal{Y} = \{y: 0 \leq y \leq C\}$ \\
    $y_k^* = \argmax_{y \in \mathcal{Y}}\alpha(y, \mathcal{D})$ \\
    Generate $\{\mathbf{x}_{j}\}_{j=1}^N$ where $\mathbf{x}_{j} \sim p_\theta(\mathbf{x} \mid y_k^*, \mathcal{D})$ \\
    Query the oracle function $f(\cdot)$ with generated samples $\{\mathbf{x}_{j}\}_{j=1}^N$ \\ 
    $\mathcal{D} \leftarrow \mathcal{D} \cup \{\mathbf{x}_{j}, f(\mathbf{x}_{j})\}_{j=1}^N$ \\
    $\phi_k \leftarrow \max\mathcal(f(x)) ~~s.t.~~x \in \mathcal{D}$ 
}
\KwOut{Reconstructed $\{\phi_k\}_{k=1}^K$}
 \caption{Diff-BBO}
 \label{alg:diff_bo}

\end{algorithm}

\subsection{The \ours{} Framework}

 \ours{} is an online BBO framework that adopts inverse surrogate models. It leverages the conditional diffusion model to learn the conditional distribution of $p(\boldsymbol{x}|y, \mathcal{D})$ with training data $\mathcal{D}$. It supports high-quality conditional sampling given an arbitrary conditioning variable $y$ within the training data distribution. We can design the acquisition function to select $y$ for conditional sampling in the design space. 
\Cref{tab:bbo_comparison} compares forward and inverse approaches for solving online and offline BBO problems. The key distinction is that online BBO requires the design of an acquisition function to actively query new data for sequential decision-making. To the best of our knowledge, \ours{} is the first online BBO approach to propose an acquisition function, $\alpha(y,\mathcal{D})$, for the inverse surrogate model $p(x|y, \mathcal{D})$.
In this setting, the objective in \Cref{eqn:obj} becomes:
\begin{equation}
\label{eqn:optimization_objective}
    \max_{y_k \in \R} \sum\nolimits_{k=1}^K f(\boldsymbol{x}_k), ~~\boldsymbol{x}_k \sim p_{\theta}(\cdot \mid y_k, \mathcal{D}),~~\theta \in \Theta.
\end{equation}

To solve the above optimization problem, we introduce \ours{} in \Cref{alg:diff_bo}. At each iteration $k$, we train a conditional diffusion model and select $y_{k}^*$ with the highest acquisition function score $\alpha(y_k, \mathcal{D})$. In practice, we select $y_k$ from a constructed candidate set $\mathcal{Y}_k$, where we have $y_k = w \cdot \phi_k$. The weight $w$ belongs to a fixed set of positive scalars $\mathcal{W}$ and $\phi_k$ is the maximum function values being queried in the current training dataset $\mathcal{D}$. 
Conditioning on $y_{k}^* = \arg\max_{y_k}{\alpha(y_k, \mathcal{D})}$, we generate $N$ samples $\{\mathbf{x}_{j}\}_{j=1}^N$, where $\boldsymbol{x}_{j} \sim p_\theta(\mathbf{x}|y_k^*, \mathcal{D})$. By querying the black-box oracle to evaluate each $\boldsymbol{x}_j$, we obtain the best possible reconstructed value $\phi_k$ for the current iteration, and append all queried data pairs $\{\mathbf{x}_{j}, f(\mathbf{x}_{j})\}_{j=1}^N$ to the training dataset $\mathcal{D}$. The overall \ours{} framework with the conditional diffusion model is shown in \Cref{fig:framework}.

\subsection{Conditional Diffusion Model Training}

Instead of estimating a set of deterministic parameters $\theta$ from a deterministic neural network, we are interested in learning its Bayesian posterior to further understand and improve the model's performance as well as its reliability with uncertainty quantification. In Bayesian setting, we consider the model parameters $\theta \in \Theta$, where $\Theta$ is the parameter space, and maintain its posterior distribution $p(\theta | \mathcal{D})$, which is learned from training data $\mathcal{D}$. By choosing $\theta$ from its posterior, essentially we sample a score function $\widetilde{s}_\theta(\boldsymbol{x}_t, y, t)$ from the probability distribution $p(s_\theta ~|~\boldsymbol{x}_t, y, t, \mathcal{D}) = \mathcal{N}(s_\theta(\boldsymbol{x}_t, y, t), \Sigma_{s_\theta}(\boldsymbol{x}_t, y, t))$, whose expected value is $s_\theta(\boldsymbol{x}_t, y, t)$, and variance  is a diagonal covariance matrix $\Sigma_{s_\theta}(\boldsymbol{x}_t, y, t)$.

Specifically, we adopt classifier-free guidance as in \citep{ho2022classifier} to eliminate the requirement of training a separate classifier.
We jointly train an unconditional diffusion model $p_\theta(\boldsymbol{x})$ parameterized by $\epsilon_\theta(\mathbf{x}, t, \emptyset)$ and a conditional diffusion model $p_\theta(\boldsymbol{x} | y)$ parameterized by $\epsilon_\theta(\mathbf{x}, t, y)$ 
by minimizing the following loss function:
\begin{equation}
        \mathcal{L}=\mathbb{E}_{\boldsymbol{x}_0, y, \boldsymbol{\epsilon}, t , \boldsymbol{x}_t, \lambda} \left[w\left(t\right)\left\|\boldsymbol{\epsilon}-\boldsymbol{\epsilon}_\theta\left(\boldsymbol{x}_t, t, (1 - \lambda)y  + \lambda \emptyset \right)\right\|_2^2\right],
\end{equation}
where $\boldsymbol{x}_0  \sim q(\boldsymbol{x}), \boldsymbol{\epsilon} \sim \mathcal{N}(0, \boldsymbol{I}), t \sim \mathcal{U}(0, T), \boldsymbol{x}_t \sim q(\boldsymbol{x}_t \mid \boldsymbol{x}_0), \lambda \sim \mathrm{Bernoulli}(p_{\mathrm{uncond}})$, and $p_{\mathrm{uncond}}$ is the probability of setting $y$ to the unconditional information $\emptyset$.

\begin{figure*}[t]
    \centering
    \includegraphics[width=0.9\linewidth]{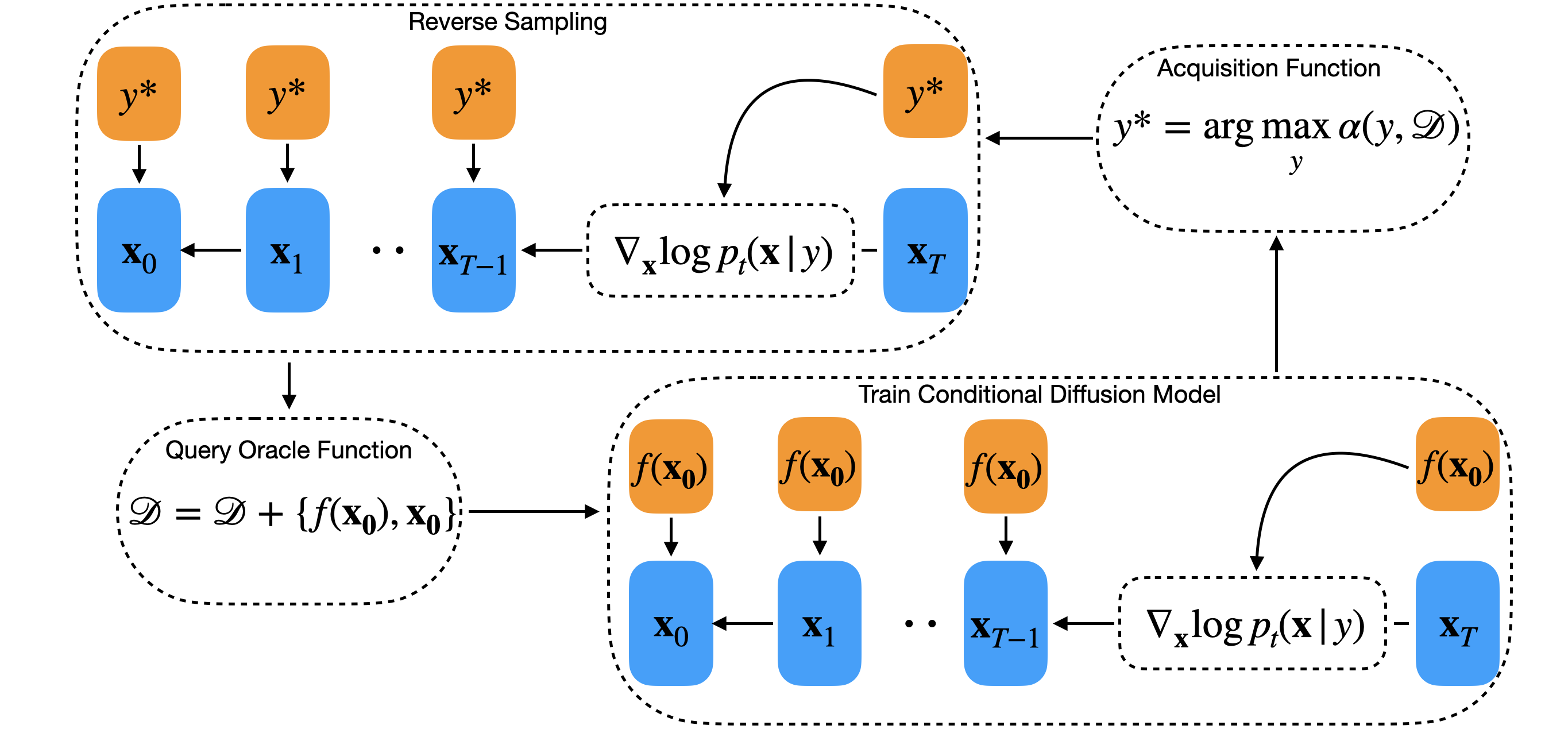}
    \caption{\ours{} framework using the conditional diffusion model as the inverse surrogate model. It includes $4$ stages: 1. Train the conditional diffusion model given the training dataset. 2. Compute the acquisition function and select the optimal $y^*$ to condition on. 3. Generate samples $\{\mathbf{x_0}\}$ conditioned on $y^*$. 4. Query the oracle given generated samples $\{\mathbf{x_0}\}$ and update the training dataset.}
    \label{fig:framework}
\end{figure*}

\section{Acquisition Function Design}
In this section, we propose a novel acquisition function called \textit{Uncertainty-aware Exploration} (UaE) for \ours{}. We first analyze the uncertainty of the conditional diffusion model from both theoretical and practical perspectives, decomposing the uncertainty into the aleatoric and epistemic components. Based on the uncertainty decomposition, we then propose the acquisition function UaE. We prove that by achieving a balance between high objective values and low epistemic uncertainty, UaE effectively provides a near-optimal solution for online BBO. 

\subsection{Uncertainty Quantification on Conditional Diffusion Model}
\label{sec:acqui_uq}

The optimization problem defined in \Cref{eqn:optimization_objective} presents a probabilistic formulation of the online BBO problem using inverse modeling. Instead of searching for a single optimal point $\boldsymbol{x}$, it aims to learn a parameterized distribution $p_\theta(\boldsymbol{x} \mid y, \mathcal{D})$ for a given $y$, and sample from this predictive distribution. As such, we resort to the tools of Bayesian inference to solve this task. More specifically, given an observed value $y$ of a sample $\boldsymbol{x}$, the objective of Bayesian inference is to estimate the predictive distribution:
\begin{align}
\label{eqn:pred_distn}
    p(\boldsymbol{x} \mid y, \mathcal{D}) = \E_{\theta}[p_\theta(\boldsymbol{x} \mid y)] = \int_\theta p_\theta(\boldsymbol{x} \mid y) p(\theta \mid \mathcal{D}) d \theta.
\end{align}
Its empirical estimation over an ensemble of $M$ conditional diffusion models is computed as:
\begin{align*}
    \widehat{\E}_{\theta}[p_\theta(\boldsymbol{x} \mid y)] = \frac{1}{M} \sum\nolimits_{i=1}^M p_{\theta_i}(\boldsymbol{x} \mid y).
\end{align*}

By \Cref{eqn:pred_distn}, we recognize that the uncertainty arises from two sources: uncertainty in deciding parameter $\theta$ from its posterior $p(\theta | \mathcal{D})$ and uncertainty in generating sample $\boldsymbol{x}$ from a fixed diffusion model $p_\theta(\boldsymbol{x} \mid y)$ after $\theta$ is chosen. 
Before proceeding with the uncertainty decomposition in the conditional diffusion model, it is crucial to understand how to capture the overall uncertainty when using a diffusion model to generate $\boldsymbol{x}$.
Essentially, it can be explicitly traced through the denoising process. More specifically, \Cref{thm:sde_uncertainty} provides analytical solutions to compute the uncertainty on a single denoising process of general score-based conditional diffusional models. It offers theoretical insights of how uncertainty is being propagated through the reverse denoising process both in discrete time and continuous time, which is characterized through the lens of stochastic differential equations (SDEs) of the Ornstein–Uhlenbeck (OU) process. Detailed proofs can be found in \Cref{sec:app_uq}. 
\begin{restatable}{theorem}{uncertaintyPropagation} (Uncertainty propagation)
\label{thm:sde_uncertainty}
    Let $t\in[T]$ be the diffusion step, $s_\theta(\boldsymbol{x}, y , t)$ be the score function of the corresponding diffusion model $p_{\theta}(\boldsymbol{x} \mid y)$. For a single conditional diffusional model $p_{\theta}(\boldsymbol{x} \mid y)$, the uncertainty in generating a sample $\boldsymbol{x}$ can be analytically traced through the discrete-time reverse denoising process as follows:
    \begin{equation*}
        \mathrm{Var}(\boldsymbol{x}_{t-1}) 
    = \frac{1}{4} \mathrm{Var}(\boldsymbol{x}_{t}) +   \mathrm{Var}(s_\theta(\boldsymbol{x}, y , t)) + \frac{1}{2} \left( \E\left[\boldsymbol{x}_{t} \circ s_\theta(\boldsymbol{x}_t, y , t)\right] - \E[\boldsymbol{x}_{t}]\circ\E[s_\theta(\boldsymbol{x}_{t}, y , t)] 
    \right)
    + I,
    \end{equation*}
    \begin{equation*}
        \E(\boldsymbol{x}_{t-1}) 
    = \frac{1}{2} \E(\boldsymbol{x}_{t}) + \E (s_\theta(\boldsymbol{x}, y , t)),
    \end{equation*}
    where $\circ$ is the Hadamard product, and $I$ is the identity matrix. Similarly, in continuous-time process, the uncertainty can be captured as follows:
    \begin{align}
        \mathrm{Var}(\boldsymbol{x}_0) = (T + 1)I + \mathrm{Var}\left( \int_{t=0}^T \left( \frac{1}{2} \boldsymbol{x}_t + s_\theta(\boldsymbol{x}, y , t) \right) \dif t \right).
    \end{align}

\end{restatable}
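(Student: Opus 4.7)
The plan is to recognize that the coefficients $\tfrac{1}{2}$ and $\tfrac{1}{4}$ in the theorem pin down the underlying dynamics as a unit-step Euler--Maruyama discretization of an Ornstein--Uhlenbeck reverse process with drift $-\tfrac{1}{2}\boldsymbol{x}$ and unit diffusion. So the discrete-time reverse transition is
\begin{equation*}
\boldsymbol{x}_{t-1} = \tfrac{1}{2}\boldsymbol{x}_t + s_\theta(\boldsymbol{x}_t, y, t) + \xi_t, \qquad \xi_t \sim \mathcal{N}(\boldsymbol{0}, I),
\end{equation*}
with $\xi_t$ independent of the pair $(\boldsymbol{x}_t, s_\theta(\boldsymbol{x}_t, y, t))$. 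I would treat the continuous-time statement as the SDE limit of this recursion via Anderson's time reversal.

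For the discrete mean identity, linearity of expectation applied directly to the update yields the claim since $\E(\xi_t)=\boldsymbol{0}$. For the variance, I would expand $\mathrm{Var}(A+B+C)$ componentwise as the sum of the three variances plus twice the three pairwise covariances, taking $A=\tfrac{1}{2}\boldsymbol{x}_t$, $B=s_\theta(\boldsymbol{x}_t, y, t)$, $C=\xi_t$. Independence of $\xi_t$ kills the two covariances involving $C$ and contributes $\mathrm{Var}(C)=I$; the coefficient $\tfrac{1}{2}$ on $A$ produces the $\tfrac{1}{4}$ factor on $\mathrm{Var}(\boldsymbol{x}_t)$; and the single remaining cross term $2\,\mathrm{Cov}(\tfrac{1}{2}\boldsymbol{x}_t, s_\theta)$ evaluates coordinate-wise to $\E[\boldsymbol{x}_t \circ s_\theta] - \E[\boldsymbol{x}_t]\circ\E[s_\theta]$, with $\circ$ reflecting that the theorem tracks only the diagonal of the covariance. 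Summing the four contributions gives the stated identity.

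For the continuous-time statement, the Anderson-reversed SDE reads $d\boldsymbol{x}_t = \bigl(\tfrac{1}{2}\boldsymbol{x}_t + s_\theta(\boldsymbol{x}_t, y, t)\bigr)dt + dW_t$, which integrates to
\begin{equation*}
\boldsymbol{x}_0 = \boldsymbol{x}_T + \int_0^T \left(\tfrac{1}{2}\boldsymbol{x}_t + s_\theta(\boldsymbol{x}_t, y, t)\right) dt + W_T,
\end{equation*}
with $\boldsymbol{x}_T \sim \mathcal{N}(\boldsymbol{0}, I)$ contributing $I$ to the variance and the Brownian endpoint $W_T$ contributing $T\cdot I$. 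Adding the drift-integral variance yields the claimed $(T+1)I + \mathrm{Var}\bigl(\int_0^T(\cdot)\,dt\bigr)$ decomposition.

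The main obstacle is precisely the continuous-time step: the drift integral is pathwise coupled to the Wiener process through the SDE, so a strict application of the variance-sum rule to $\int_0^T(\cdot)\,dt + W_T$ would carry an Itô cross-covariance term that the stated formula silently omits. I would therefore need to make explicit the convention under which the drift and noise contributions are treated as variance-separable, either by absorbing the cross term into the drift-integral variance via Itô isometry, or by interpreting $(T+1)I$ as the exogenous-noise contribution and the remainder as the path-dependent contribution under independence of $\boldsymbol{x}_T$ from $W_T$. The discrete case requires no such subtlety and is a clean one-step calculation.
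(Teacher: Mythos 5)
Your proposal follows essentially the same route as the paper: a unit-step Euler discretization of the reverse OU SDE yielding $\boldsymbol{x}_{t-1}=\tfrac{1}{2}\boldsymbol{x}_t+s_\theta+\boldsymbol{\epsilon}$ for the discrete identities, and termwise application of the variance operator to the integrated reverse SDE for the continuous one. Two remarks: your (correct) expansion gives the cross term $2\,\mathrm{Cov}(\tfrac{1}{2}\boldsymbol{x}_t, s_\theta)=\E[\boldsymbol{x}_t\circ s_\theta]-\E[\boldsymbol{x}_t]\circ\E[s_\theta]$ with coefficient $1$, whereas the theorem and the paper's own proof write $\tfrac{1}{2}$, so the discrepancy you would inherit is already present in the paper; and the continuous-time cross-covariance issue you flag is real --- the paper likewise applies the variance operator to the sum $\boldsymbol{x}_T-\int(\cdot)\,\dif t+\int\dif\boldsymbol{w}_t$ and silently drops all cross terms.
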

While \Cref{thm:sde_uncertainty} establishes the existence of closed-form solutions to quantify uncertainty based on the intrinsic properties of diffusion models, performing exact Bayesian inference when training diffusion models in practice requires non-trivial efforts and can be computationally demanding. Hence, in \Cref{sec:ud}, we will introduce a practically-efficient approach to quantify and decompose the uncertainty based on \Cref{eqn:pred_distn} .

\subsection{Uncertainty Decomposition}
\label{sec:ud}

To systematically analyze the effect of uncertainty in our inverse modeling approach using the conditional diffusion model, we now provide a practical method to perform uncertainty decomposition in terms of the aleatoric component and its epistemic counterpart. 

The \textit{aleatoric uncertainty} is captured by the variance of the likelihood $p_{\theta}(\boldsymbol{x} \mid y)$, which is proportional to the variance of the measurement noise during sample generation, irreducible and task-inherent. To estimate the aleatoric uncertainty, we can Monte Carlo (MC) sample $\boldsymbol{x}$ for $N$ times from a learned likelihood function $p_\theta(\boldsymbol{x} \mid y)$ for fixed $y, \theta$.

In contrast, the \textit{epistemic uncertainty} is captured through the variance of the posterior distribution $p(\theta \mid \mathcal{D})$, which is proportional to the variance of the score network, and is reducible with the increase of training data. Recall that $\Theta$ is the parameter space that contains all possible model parameters $\theta$, which are used to generate samples from the predictive distribution $p(\boldsymbol{x} \mid y, \mathcal{D})$. As the dataset size and quality grows, the variance of 
the posterior distribution shrinks, corresponding to the reduction of epistemic uncertainty in learned parameters $\theta \sim p(\theta \mid \mathcal{D})$.
To estimate the epistemic uncertainty, we use the ensemble approach. 
We train $M$ conditional diffusion models by initializing them with different random seeds and obtain $M$ model parameters $\{\theta_i\}_{i=1}^M$. 
Then we generate $N$ samples $\{\boldsymbol{x}_j\}_{j=1}^N$ for each diffusion model with corresponding parameter $\theta_i,~\forall i \in [M]$. Combining the above gives a systematic way to decompose and estimate the two types of uncertainty in practice, which is formally described in \Cref{thm:decompose_un}.

\begin{proposition} [Uncertainty Decomposition]
\label{thm:decompose_un}
At each iteration $k \in [K], \forall i \in [M], j \in [N]$, the overall uncertainty in inverse modeling can be decomposed into its aleatoric and epistemic components, which can be empirically measured as follows:

\begin{equation}
\begin{aligned} 
    \Delta_{\text {aleatoric }} (y, \mathcal{D}) 
    &=\mathbb{E}_{\theta_i \sim p(\cdot \mid \mathcal{D})}\left[\operatorname{Var}_{\boldsymbol{x}_{i,j} \sim p_{\theta_i}(\cdot \mid y)}\left( \|\boldsymbol{x}_{i, j} \|
    \right)\right];\\ 
    \Delta_{\text {epistemic }} (y, \mathcal{D})
    &=\operatorname{Var}_{\theta_i \sim p(\cdot \mid \mathcal{D})}\left(\mathbb{E}_{\boldsymbol{x}_{i,j} \sim p_{\theta_i}(\cdot \mid y)}\left[ \| \boldsymbol{x}_{i, j} \|
    \right]\right).
\end{aligned}
\end{equation}

\end{proposition}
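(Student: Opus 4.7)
The plan is to recognize that Proposition 1 is essentially an instantiation of the \emph{law of total variance} applied to the scalar random variable $\|\boldsymbol{x}\|$ under the hierarchical generative structure $\theta \sim p(\theta \mid \mathcal{D})$, $\boldsymbol{x} \sim p_\theta(\cdot \mid y)$ implied by the predictive distribution in \Cref{eqn:pred_distn}. Concretely, I would first fix $y$ and $\mathcal{D}$ and regard the total uncertainty of the inverse model as $\operatorname{Var}(\|\boldsymbol{x}\|)$ where the expectation is taken over the joint distribution $p_\theta(\boldsymbol{x}\mid y)\, p(\theta \mid \mathcal{D})$. Because this joint distribution factorizes into a ``parameter draw'' step and a ``conditional generation'' step, $\|\boldsymbol{x}\|$ is an exchangeable random variable in the standard sense required for a variance decomposition.

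Second, I would apply the law of total variance with $\theta$ as the conditioning variable:
\begin{equation*}
\operatorname{Var}(\|\boldsymbol{x}\|)
= \mathbb{E}_{\theta \sim p(\cdot \mid \mathcal{D})}\!\left[\operatorname{Var}_{\boldsymbol{x} \sim p_\theta(\cdot \mid y)}(\|\boldsymbol{x}\|)\right]
+ \operatorname{Var}_{\theta \sim p(\cdot \mid \mathcal{D})}\!\left(\mathbb{E}_{\boldsymbol{x} \sim p_\theta(\cdot \mid y)}[\|\boldsymbol{x}\|]\right).
\end{equation*}
The first summand is, by definition, the expected within-model variance: the irreducible noise in the sampling procedure after $\theta$ is fixed, which matches the characterization of aleatoric uncertainty given in \Cref{sec:ud}. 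The second summand is the variance of the per-model mean across the posterior on $\theta$; it vanishes as the posterior $p(\theta \mid \mathcal{D})$ concentrates with more data, matching the definition of epistemic uncertainty. I would then identify these two terms with $\Delta_{\text{aleatoric}}(y, \mathcal{D})$ and $\Delta_{\text{epistemic}}(y, \mathcal{D})$ respectively.

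Third, I would justify the stated empirical estimators. Using $M$ ensemble members $\{\theta_i\}_{i=1}^M \sim p(\theta \mid \mathcal{D})$ (obtained, as described earlier, via different random seeds) and $N$ samples $\{\boldsymbol{x}_{i,j}\}_{j=1}^N \sim p_{\theta_i}(\cdot \mid y)$ per ensemble member, replace each expectation over $\theta$ by a Monte Carlo average over $i$ and each expectation/variance over $\boldsymbol{x} \mid \theta$ by the empirical mean/variance over $j$. Standard Monte Carlo consistency then yields unbiased (or asymptotically unbiased, if the sample variance uses the $1/N$ normalization rather than $1/(N-1)$) estimators of the two population quantities, giving the formulas stated in the proposition.

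The main technical obstacle is a bookkeeping one rather than a deep one: one must verify that the conditional moments $\mathbb{E}_{\boldsymbol{x}\sim p_\theta(\cdot\mid y)}[\|\boldsymbol{x}\|]$ and $\operatorname{Var}_{\boldsymbol{x}\sim p_\theta(\cdot\mid y)}(\|\boldsymbol{x}\|)$ are well-defined and finite so that the law of total variance applies; this can be argued from the fact that the conditional diffusion model's terminal distribution has bounded second moments, which follows from the closed-form propagation in \Cref{thm:sde_uncertainty} (the variance at $t=0$ is finite whenever the score-function moments appearing there are finite). With that integrability check in place, the decomposition follows immediately from the law of total variance, and the empirical formulas follow from plug-in Monte Carlo estimation.
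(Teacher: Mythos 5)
Your proposal is correct: the decomposition is exactly the law of total variance (Eve's law) applied to the scalar $\|\boldsymbol{x}\|$ under the hierarchy $\theta \sim p(\theta\mid\mathcal{D})$, $\boldsymbol{x}\sim p_\theta(\cdot\mid y)$, with the two summands identified as the aleatoric and epistemic components and then estimated by plug-in Monte Carlo over the $M$ ensemble members and $N$ samples. The paper itself gives \emph{no} proof of Proposition~\ref{thm:decompose_un} --- it is presented essentially as a definition of the empirical estimators --- so your argument supplies the justification the paper leaves implicit; notably, the very same total-variance decomposition is the one the paper does invoke explicitly in its proof of Theorem~\ref{thm:sub_optimality_gap}, where $\mathrm{Var}(\Delta(p_\theta,y_k^*))$ is split into an expected-conditional-variance term ($T_1$, aleatoric) and a variance-of-conditional-expectation term ($T_2$, epistemic). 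Two minor quibbles: ``exchangeable'' is not the relevant hypothesis for the law of total variance (you only need $\|\boldsymbol{x}\|$ to have a finite second moment under the joint law, which your integrability remark already covers via \Cref{thm:sde_uncertainty} or, more directly, via the sub-Gaussian assumption the paper later imposes in \Cref{ass:subGaussian}); and in practice the ``posterior draws'' $\theta_i$ are deep-ensemble members from different random seeds, so treating them as i.i.d.\ samples from $p(\theta\mid\mathcal{D})$ is itself an approximation the paper adopts, not something your Monte Carlo consistency step can establish. Neither point is a gap in the mathematics of the decomposition.
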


\subsection{Uncertainty-aware Exploration}
\label{sec:acqui_design}
At each iteration $k \in [K]$ of \ours{} algorithm, the acquisition function $\alpha(y_k, \mathcal{D})$ proposes an optimal scalar value $y_k^*$ as follows: 
$$
    y_k^* = \operatorname{argmax}_{y_k} \alpha(y_k, \mathcal{D}),
$$
which is used to generate $\boldsymbol{x}$ in the design space using conditional diffussion model.

Intuitively, designing an effective acquisition function for inverse modeling requires a balance between high objective values $y_k$ and low epistemic uncertainty of $y_k$.
On the one hand, it is advantageous to generate samples $\boldsymbol{x}$ conditioned on higher $y_k$ and query the oracle with these promising samples for online BBO. 
On the other hand, the epistemic uncertainty of $y_k$ can be used to gauge the approximation error between $y_{k}^*$ and the reconstructed function value $\max_{j \in [N]}  f(\boldsymbol{x}_j)$, where $f(\cdot)$ is the black-box oracle, and $\boldsymbol{x}_j \sim p_{\theta}(\cdot|y_k^*, \mathcal{D}), \forall j \in [N]$.

Therefore, we propose the acquisition function named \textit{Uncertainty-aware Exploration} (UaE):
\begin{equation}
\label{eqn:acquisition}
    \alpha(y, \mathcal{D}) = y - \Delta_{\text {epistemic}}(y, \mathcal{D}),
\end{equation}
which utilizes the uncertainty estimation on conditional diffusion model as given in \Cref{thm:decompose_un}. It effectively penalizes the candidates for which the model is less certain. As shown later, by balancing the exploration-exploitation trade-off, UaE provides an effective way to solve the online BBO problem.

\subsection{Sub-optimality of UaE}
\label{sec:acqui_design1}
To evaluate the quality of generated samples, we theoretically analyze the sub-optimality performance gap between $y_k^*$ and reconstructed value at each iteration. In particular, \Cref{thm:y_bound} and \Cref{thm:sub_optimality_gap} demonstrate that such sub-optimality gap can be effectively handled in inverse modeling, contributing to the optimality of our proposed algorithm as established in \Cref{thm:optimization_equivalence}. We emphasize that the purpose of \Cref{thm:y_bound} and \Cref{thm:sub_optimality_gap} is to support \Cref{thm:optimization_equivalence} by validating the design of UaE. These results are designed to hold universally with probability one, independent of hyperparameters such as batch size. As a result, the bounds presented are intentionally conservative and are not tightened to rely on high-probability guarantees.  Detailed proofs are deferred to \Cref{sec:app_sub_optimality}.

We first show that by using conditional diffusion model, the expected error of the sub-optimality performance gap can be effectively bounded under mild assumptions.

\begin{restatable}{theorem}{yBoundExp} 
\label{thm:y_bound}
    At each iteration $k \in [K]$, define the sub-optimality performance gap as 
    \begin{equation}
    \label{eqn:sub_optimality_gap_main}
        \Delta(p_\theta, y_k^*) = \left| y_k^*- \max_{j \in [N]} f(\boldsymbol{x}_j) \right|, ~~ \mathrm{where}~~\boldsymbol{x}_j \sim p_{\theta}(\cdot|y_k^*, \mathcal{D}).
    \end{equation}
    Assume that there exists some $\theta^* \sim p(\theta | \mathcal{D})$ that produces a probability distribution  $p_{\theta^*}(\cdot \mid \mathcal{D})$ such that it is able to generate a sample $\boldsymbol{x}^*$ that perfectly reconstructs $y_k^*$. Suppose function $f$ is $L$-Lipschitz and each sample is $\sigma$-subGaussian, 
    it can be shown that 
    \begin{equation*}
        \E\left[{\Delta(p_\theta, y_k^*)}\right] 
        \leq c_1 L \sqrt{d} \sigma,
    \end{equation*}
    where $d$ is the dimensionality of the design space, $c_1$ is some universal constant.
\end{restatable}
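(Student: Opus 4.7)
The key is the assumed existence of a ``perfect'' sample $\boldsymbol{x}^*$ with $f(\boldsymbol{x}^*) = y_k^*$, which lets us rewrite the target quantity in terms of the black-box $f$ alone and immediately invoke $L$-Lipschitz continuity. My approach is: (i) replace $y_k^*$ by $f(\boldsymbol{x}^*)$ inside $\Delta$; (ii) upper bound the $\max$-based gap by a single-sample gap; (iii) convert the single-sample $f$-gap into a distance in design space via Lipschitzness; and (iv) use the $\sigma$-subGaussian hypothesis on the samples to convert that distance into the stated $\sqrt{d}\sigma$ bound in expectation.

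\textbf{Step 1: algebraic reduction.} Using $y_k^* = f(\boldsymbol{x}^*)$, write
\begin{equation*}
    \Delta(p_\theta, y_k^*) = \bigl| f(\boldsymbol{x}^*) - \max\nolimits_{j \in [N]} f(\boldsymbol{x}_j)\bigr|.
\end{equation*}
Because $\boldsymbol{x}^*$ is one feasible design whose score equals the conditioning target, I will argue that we may take $f(\boldsymbol{x}^*) \geq \max_j f(\boldsymbol{x}_j)$ in the worst case (if not, the gap is already zero and there is nothing to prove), so that
\begin{equation*}
    \Delta(p_\theta, y_k^*) = \min\nolimits_{j \in [N]} \bigl( f(\boldsymbol{x}^*) - f(\boldsymbol{x}_j)\bigr)\leq f(\boldsymbol{x}^*) - f(\boldsymbol{x}_1).
\end{equation*}
This drops the coupling across the $N$ samples to a single representative draw, at the cost of not exploiting the $\max$, which is fine for an expectation bound.

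\textbf{Step 2: Lipschitz + subGaussian concentration.} By $L$-Lipschitzness of $f$, for any single sample $\boldsymbol{x}_j$,
\begin{equation*}
    \bigl|f(\boldsymbol{x}^*) - f(\boldsymbol{x}_j)\bigr| \leq L\,\|\boldsymbol{x}^* - \boldsymbol{x}_j\|.
\end{equation*}
Now both $\boldsymbol{x}^*$ (drawn from $p_{\theta^*}(\cdot\mid y_k^*, \mathcal{D})$) and $\boldsymbol{x}_j$ (drawn from $p_\theta(\cdot\mid y_k^*, \mathcal{D})$) are $\sigma$-subGaussian by hypothesis, so their centered difference is $O(\sigma)$-subGaussian coordinatewise. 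A standard concentration argument for subGaussian random vectors in $\mathbb{R}^d$ then yields $\mathbb{E}\,\|\boldsymbol{x}^* - \boldsymbol{x}_j\| \leq c_1 \sqrt{d}\,\sigma$ for a universal constant $c_1$ (for instance via $\mathbb{E}\|\boldsymbol{x}^*-\boldsymbol{x}_j\| \leq (\mathbb{E}\|\boldsymbol{x}^*-\boldsymbol{x}_j\|^2)^{1/2}$ and the $d$-dimensional subGaussian second-moment bound). Chaining the inequalities gives
\begin{equation*}
    \mathbb{E}\bigl[\Delta(p_\theta, y_k^*)\bigr]\leq L\,\mathbb{E}\,\|\boldsymbol{x}^* - \boldsymbol{x}_1\| \leq c_1 L\sqrt{d}\,\sigma,
\end{equation*}
as claimed.

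\textbf{Main obstacle.} The delicate part is not the Lipschitz-plus-concentration chain, which is routine, but rather making the coupling between $\boldsymbol{x}^*$ and $\boldsymbol{x}_j$ precise: $\boldsymbol{x}^*$ is produced by a posterior draw $\theta^*$ that may differ from $\theta$, while the subGaussian hypothesis is stated per-sample. I plan to handle this by interpreting ``$\sigma$-subGaussian'' as applying uniformly to samples drawn from the posterior-predictive $p(\boldsymbol{x}\mid y_k^*, \mathcal{D})$ (of which both $\boldsymbol{x}^*$ and $\boldsymbol{x}_j$ are instances after marginalizing over $\theta$), so that their difference inherits a $\sqrt{2}\sigma$-subGaussian parameter and the $c_1$ in the final bound absorbs this factor. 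The secondary issue — whether $f(\boldsymbol{x}^*)\geq \max_j f(\boldsymbol{x}_j)$ always holds — is sidestepped by noting that when it fails, $\Delta = \max_j f(\boldsymbol{x}_j) - f(\boldsymbol{x}^*)$ is bounded by $L\max_j\|\boldsymbol{x}^* - \boldsymbol{x}_j\|$ via the same Lipschitz step, and a union/expectation bound over the $N$ samples keeps the $\sqrt{d}\sigma$ scaling (at worst inflating $c_1$).
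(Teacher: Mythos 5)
Your proof follows essentially the same route as the paper's: substitute $y_k^* = f(\boldsymbol{x}^*)$ using the perfect-reconstruction assumption, apply $L$-Lipschitzness to reduce to $\E\lrb{\|\boldsymbol{x}^* - \boldsymbol{x}_k\|}$, bound that by $O(\sqrt{d}\sigma)$ via the subGaussian hypothesis, and dispose of the mean-mismatch term by observing that $\theta$ and $\theta^*$ are drawn from the same posterior so the marginals of $\boldsymbol{x}^*$ and $\boldsymbol{x}_k$ agree (the paper does this via its Lemma on posterior measurability; you do it by working with the posterior-predictive directly). Two small remarks. First, your Step 1 claim that when $f(\boldsymbol{x}^*) < \max_j f(\boldsymbol{x}_j)$ ``the gap is already zero'' is false as written — $\Delta$ is an absolute value, so the gap is then $\max_j f(\boldsymbol{x}_j) - y_k^* > 0$ — but you repair this correctly in your closing paragraph, so the argument survives. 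Second, you are actually more ambitious than the paper, which simply sets $N=1$ and asserts the general case follows; your union bound over the $N$ samples for $\E\lrb{\max_j \|\boldsymbol{x}^* - \boldsymbol{x}_j\|}$ would introduce an extra $\sigma\sqrt{\log N}$ term, so the constant $c_1$ in the $\max$-version would not be universal unless $\log N = O(d)$. That is a real limitation, but it is one the paper's own proof shares rather than resolves.
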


\Cref{thm:y_bound} suggests that in expectation, the reconstructed function value 
closely approximates the provided conditional information $y_k^*$, implying \ours{} is effective in searching for promising samples in the design space by utilizing the information from the objective space. Hence, to achieve a robust estimator for the online BBO problem, the primary concern shifts to controlling the variance of the sub-optimality gap defined in  \Cref{eqn:sub_optimality_gap_main}, which is further assessed and evaluated in \Cref{thm:sub_optimality_gap}.

\begin{restatable}{theorem}{yBoundVar} \textrm(Sub-optimality bound)
\label{thm:sub_optimality_gap}
    At each iteration $k \in [K]$, suppose $M$ model parameters $\{\theta_i\}_{i=1}^M$ are generated from the ensemble model for some fixed dataset $\mathcal{D}$. Suppose function $f$ is $L$-Lipschitz, 
    it can be shown that the variance of the sub-optimality performance gap of each model is bounded by the epidemic uncertainty:
    \begin{equation}   
    \mathrm{Var}\left(\Delta(p_{\theta_i}, y_k^*)\right) \leq c_2 L^2 d \sigma^2 + c_2 L^2 \Delta_{\mathrm{epistemic}}(y_k^*, \mathcal{D}),
    \end{equation}
    where $c_2$ is some universal positive constant.
\end{restatable}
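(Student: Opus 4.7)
The plan is to bound $\mathrm{Var}(\Delta(p_{\theta_i}, y_k^*))$ via the law of total variance, letting the two resulting pieces correspond to the two terms on the right-hand side of the claim. Since the randomness in $\Delta$ comes from both $\theta_i \sim p(\theta \mid \mathcal{D})$ and $\boldsymbol{x}_j \sim p_{\theta_i}(\cdot \mid y_k^*, \mathcal{D})$, I would decompose
\begin{equation*}
    \mathrm{Var}\bigl(\Delta(p_{\theta_i}, y_k^*)\bigr)
    = \E_{\theta_i}\!\bigl[\mathrm{Var}_{\boldsymbol{x}\mid\theta_i}(\Delta)\bigr]
    + \mathrm{Var}_{\theta_i}\!\bigl(\E_{\boldsymbol{x}\mid\theta_i}[\Delta]\bigr),
\end{equation*}
and treat the first summand as the aleatoric contribution and the second as the epistemic contribution. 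The target inequality then follows by bounding them by $c_2 L^2 d\sigma^2$ and $c_2 L^2 \Delta_{\mathrm{epistemic}}(y_k^*,\mathcal{D})$ respectively.

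For the aleatoric piece, note that $y_k^*$ is fixed at iteration $k$, so $\mathrm{Var}_{\boldsymbol{x}\mid\theta_i}(\Delta) = \mathrm{Var}_{\boldsymbol{x}\mid\theta_i}(\max_{j\in[N]} f(\boldsymbol{x}_j))$. Using the Lipschitz inequality $|\max_j f(\boldsymbol{x}_j) - \max_j f(\boldsymbol{x}'_j)| \le L \max_j \|\boldsymbol{x}_j - \boldsymbol{x}'_j\|$ together with the standard variance-to-$L^2$-distance bound, I would pass from $\mathrm{Var}(\max_j f(\boldsymbol{x}_j))$ to an $L^2$ times a variance of norms expression. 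The $\sigma$-sub-Gaussianity of each $\boldsymbol{x}_j$ then yields a bound of order $L^2 d \sigma^2$, which is exactly the first term of the desired inequality (a union-bound-style argument over $j \in [N]$ may introduce a universal constant, which gets absorbed into $c_2$).

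For the epistemic piece, I would invoke the same assumption used in \Cref{thm:y_bound}: there exists $\theta^*$ whose sample $\boldsymbol{x}^*$ reconstructs $y_k^*$ exactly, so that $y_k^* = f(\boldsymbol{x}^*)$. Then Lipschitzness gives, pointwise in $(\theta_i, \boldsymbol{x})$,
\begin{equation*}
    \Delta(p_{\theta_i}, y_k^*) = \bigl|f(\boldsymbol{x}^*) - \max_j f(\boldsymbol{x}_j)\bigr|
    \le L\,\bigl(\|\boldsymbol{x}^*\| + \max_j \|\boldsymbol{x}_j\|\bigr),
\end{equation*}
so $\E_{\boldsymbol{x}\mid\theta_i}[\Delta]$ is a Lipschitz image of $\E_{\boldsymbol{x}\mid\theta_i}[\|\boldsymbol{x}\|]$ up to the $\theta_i$-independent constant $L\|\boldsymbol{x}^*\|$. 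Taking variance over $\theta_i$ kills the constant and leaves $\mathrm{Var}_{\theta_i}(\E_{\boldsymbol{x}\mid\theta_i}[\Delta]) \le L^2\,\mathrm{Var}_{\theta_i}(\E_{\boldsymbol{x}\mid\theta_i}[\|\boldsymbol{x}\|]) = L^2\,\Delta_{\mathrm{epistemic}}(y_k^*, \mathcal{D})$, which matches the second term.

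The main obstacle will be the epistemic step: turning the pointwise bound $\Delta \le L\|\boldsymbol{x}^*\| + L\max_j\|\boldsymbol{x}_j\|$ into the claimed variance bound is not automatic, because a pointwise inequality does not in general transfer under $\mathrm{Var}(\cdot)$. I would resolve this by restricting attention to Lipschitz/affine comparisons (so that $\mathrm{Var}(aY + b) = a^2\mathrm{Var}(Y)$ applies) and by passing from the maximum over $N$ i.i.d.\ samples to a single-sample norm at the cost of a universal constant factor absorbed in $c_2$, exploiting the fact that under conditional i.i.d.\ sampling $\E_{\boldsymbol{x}\mid\theta_i}[\max_j\|\boldsymbol{x}_j\|]$ and $\E_{\boldsymbol{x}\mid\theta_i}[\|\boldsymbol{x}\|]$ are monotone Lipschitz-related functionals of the conditional distribution $p_{\theta_i}(\cdot\mid y_k^*)$. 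Summing the two controlled terms yields the stated inequality.
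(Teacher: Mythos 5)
Your high-level plan — law of total variance with the $\E_{\theta}[\mathrm{Var}_{\boldsymbol{x}\mid\theta}(\cdot)]$ term playing the aleatoric role and the $\mathrm{Var}_{\theta}(\E_{\boldsymbol{x}\mid\theta}[\cdot])$ term playing the epistemic role — is exactly the paper's decomposition (the paper calls it Eve's law and conditions on the pair $(\theta,\theta^*)$, where $\theta^*$ is the ``perfect'' parameter with $y_k^*=f(\boldsymbol{x}^*)$). The aleatoric step is also essentially the paper's: Lipschitzness reduces it to $\mathrm{Var}(\lrn{\boldsymbol{x}^*-\boldsymbol{x}_k})$, which sub-Gaussianity bounds by $O(d\sigma^2)$.

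The genuine gap is in your epistemic step, and you have correctly diagnosed it yourself but not repaired it. From the one-sided pointwise bound $\Delta \le L\lrn{\boldsymbol{x}^*}+L\max_j\lrn{\boldsymbol{x}_j}$ you cannot conclude $\mathrm{Var}_{\theta}(\E_{\boldsymbol{x}\mid\theta}[\Delta]) \le L^2\,\mathrm{Var}_{\theta}(\E_{\boldsymbol{x}\mid\theta}[\lrn{\boldsymbol{x}}])$: if $Y(\theta)\le Z(\theta)$ pointwise, $\mathrm{Var}(Y)$ can exceed $\mathrm{Var}(Z)$ (take $Y\in\{0,10\}$ with equal probability and $Z\equiv 10$), and $\E_{\boldsymbol{x}\mid\theta}[\Delta]$ is not an affine function of $\E_{\boldsymbol{x}\mid\theta}[\lrn{\boldsymbol{x}}]$, so the ``affine comparison'' escape route does not apply. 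The paper closes this hole with two ingredients you are missing. First, its Lemma on $\E[\lrn{\boldsymbol{x}^*-\boldsymbol{x}_k}]$ is \emph{two-sided}: $\lrn{\E[\boldsymbol{x}^*]-\E[\boldsymbol{x}_k]} \le \E[\lrn{\boldsymbol{x}^*-\boldsymbol{x}_k}] \le 8\sqrt{d}\sigma + \lrn{\E[\boldsymbol{x}^*]-\E[\boldsymbol{x}_k]}$, so the conditional mean of $\Delta$ equals $L\lrn{\E[\boldsymbol{x}^*]-\E[\boldsymbol{x}_k]}$ up to an additive slack bounded by the $\theta$-independent constant $8L\sqrt{d}\sigma$; this is what legitimately controls its variance by that of $\lrn{\E[\boldsymbol{x}^*]-\E[\boldsymbol{x}_k]}$ (any residual contribution of the bounded slack is absorbed into the $c_2L^2d\sigma^2$ term). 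Second, one still must convert $\mathrm{Var}_{\theta,\theta^*}(\lrn{\E[\boldsymbol{x}^*]-\E[\boldsymbol{x}_k]})$ into $\Delta_{\mathrm{epistemic}}(y_k^*,\mathcal{D}) = \mathrm{Var}_{\theta}(\E_{\boldsymbol{x}}[\lrn{\boldsymbol{x}}])$; the paper does this by expanding the second moment and using that $\theta$ and $\theta^*$ are drawn i.i.d.\ from the same posterior, so the cross terms collapse and the expression equals $2\,\mathrm{Var}_{\theta}(\E_{\boldsymbol{x}_k}[\lrn{\boldsymbol{x}_k}])$. Your proposal skips this identification entirely and lands on $\mathrm{Var}_{\theta}(\E[\lrn{\boldsymbol{x}}])$ only because that happens to be the definition of the target quantity; the bridge from $\mathrm{Var}_\theta(\E_{\boldsymbol{x}\mid\theta}[\Delta])$ to it is precisely the part that requires the argument. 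Keeping the difference $\lrn{\boldsymbol{x}^*-\boldsymbol{x}_k}$ intact (rather than splitting it by the triangle inequality into $\lrn{\boldsymbol{x}^*}+\lrn{\boldsymbol{x}_j}$) is what makes this cancellation available.
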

\Cref{thm:sub_optimality_gap} shows that the variance of the sub-optimality performance gap is upper bounded by the epistemic uncertainty of diffusion model with some global constants. It implies that decreasing the epistemic uncertainty reduces the variance of the performance gap, leading to more reliable results. Therefore, it is crucial to achieve an effective balance between maximizing the objective value and minimizing the corresponding epistemic uncertainty when designing the acquisition function. By doing so, UaE not only explores the objective space with high-value solutions, but also ensures stability and consistency of the optimization process. 


Finally, we prove in \Cref{thm:optimization_equivalence} that by adopting UaE for inverse modeling to guide the selection of generated samples for solving BBO problems, we can obtain a near-optimal solution for the online optimization problem defined in \Cref{eqn:optimization_objective}. The proof is available in \Cref{sec:app_optimization}. 

\begin{restatable}{theorem}{optiEqn}
\label{thm:optimization_equivalence}
    Let $\mathcal{Y}$ be the constructed candidate set at each iteration $k \in [K]$ in \Cref{alg:diff_bo}. By adopting UaE as the acquisition function to guide the sample generation process in conditional diffusion model, \ours{} (\Cref{alg:diff_bo}) achieves a near-optimal solution for the online BBO problem defined in \Cref{eqn:optimization_objective}:

    \begin{align*}
        \qquad\max_{y_k \in \R} \sum\nolimits_{k=1}^K f(\boldsymbol{x}_k), ~~\boldsymbol{x}_k \sim p_{\theta}(\cdot \mid y_k, \mathcal{D}),~~\theta \in \Theta ~~\Rightarrow \max_{y_k \in \mathcal{Y}} \sum\nolimits_{k=1}^K \alpha(y_k, \mathcal{D}).
    \end{align*}

\end{restatable}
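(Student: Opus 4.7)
The plan is to reduce the online BBO objective to a per-iteration decision problem, bound the per-iteration reconstruction error using \Cref{thm:y_bound} and \Cref{thm:sub_optimality_gap}, and then identify UaE as the natural high-confidence surrogate for the expected one-step reward. Because $\sum_{k=1}^K f(\boldsymbol{x}_k)$ is additive across iterations and each $\boldsymbol{x}_k$ depends only on $y_k$ and the current training set $\mathcal{D}$, the global maximization decouples into $K$ per-iteration sub-problems of the form $\max_{y_k}\,\E[\max_{j \in [N]} f(\boldsymbol{x}_j) \mid y_k, \mathcal{D}]$, where the batch maximum appears because \Cref{alg:diff_bo} records $\phi_k = \max_{j \in [N]} f(\boldsymbol{x}_j)$ as the best value of each round. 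It therefore suffices to show that, per iteration, the argmax over $\mathcal{Y}$ of $\alpha(y_k, \mathcal{D})$ approximately matches the argmax over $\R$ of the expected one-step oracle value.

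For a single iteration, I would first convert \Cref{thm:y_bound} into a one-sided bias estimate via Jensen's inequality: since $\E[|y_k^* - \max_{j} f(\boldsymbol{x}_j)|] \leq c_1 L \sqrt{d}\sigma$, one gets $\E[\max_{j} f(\boldsymbol{x}_j)] \geq y_k^* - c_1 L \sqrt{d}\sigma$, and the additive constant on the right is independent of $y_k^*$, so it disappears under the argmax. I would then apply \Cref{thm:sub_optimality_gap} together with Cantelli's one-sided Chebyshev inequality to convert the variance control into a high-confidence deviation bound of the form
\[
\max_{j \in [N]} f(\boldsymbol{x}_j) \;\geq\; y_k^* - c_1 L \sqrt{d}\sigma - \sqrt{\tfrac{1-\delta}{\delta}\bigl(c_2 L^2 d \sigma^2 + c_2 L^2 \Delta_{\mathrm{epistemic}}(y_k^*, \mathcal{D})\bigr)},
\]
holding with probability at least $1-\delta$. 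Every term other than $y_k^*$ itself and $\Delta_{\mathrm{epistemic}}(y_k^*, \mathcal{D})$ is either a universal constant or independent of $y_k^*$, so on the level of the argmax the one-step problem reduces to choosing $y_k^*$ that maximizes $y_k^* - C\sqrt{\Delta_{\mathrm{epistemic}}(y_k^*, \mathcal{D})}$.

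The final step is to pass from this confidence lower bound to the exact linear form of UaE. Since the candidate set $\mathcal{Y} = \{w \phi_k : w \in \mathcal{W}\}$ is compact and bounded, and $\Delta_{\mathrm{epistemic}}(\cdot, \mathcal{D})$ is nonnegative on $\mathcal{Y}$, the strictly monotone map $u \mapsto \sqrt{u}$ preserves the Pareto frontier between objective score and epistemic uncertainty, so the linear penalty in $\alpha(y_k, \mathcal{D}) = y_k - \Delta_{\mathrm{epistemic}}(y_k, \mathcal{D})$ induces the same qualitative exploration-exploitation trade-off and selects a near-optimal $y_k^*$ on $\mathcal{Y}$. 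Summing the per-iteration guarantees over $k = 1, \ldots, K$ yields the claimed reduction: maximizing $\sum_k \alpha(y_k, \mathcal{D})$ over $\mathcal{Y}$ produces the same near-optimal trajectory as maximizing $\sum_k f(\boldsymbol{x}_k)$ over $\R$, up to additive terms independent of $\{y_k\}$. The main obstacle is precisely this last replacement: matching the square-root penalty that emerges naturally from variance-based concentration to the linear penalty of UaE. The cleanest route is to exploit the compactness of $\mathcal{Y}$ together with the monotonicity of $\sqrt{\cdot}$ so that the argmax is preserved; alternatively one could retain the square-root form and obtain a quantitatively tighter but conceptually equivalent statement.
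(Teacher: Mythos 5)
Your route is genuinely different from the paper's. The paper's proof is purely algebraic: it writes $f(\boldsymbol{x}_k) = y_k - (y_k - f(\boldsymbol{x}_k))$, upper-bounds the signed gap $y_k - f(\boldsymbol{x}_k)$ by the absolute gap $\Delta(p_\theta, y_k)$ so that $y_k - \Delta(p_\theta, y_k)$ lower-bounds $f(\boldsymbol{x}_k)$, and then invokes \Cref{thm:sub_optimality_gap} to justify replacing $\Delta(p_\theta, y_k)$ by $\Delta_{\mathrm{epistemic}}(y_k,\mathcal{D})$, concluding that UaE maximizes a lower bound of the original objective. The paper explicitly states, just before \Cref{thm:y_bound}, that its results are meant to hold with probability one and deliberately avoid high-probability arguments, whereas you build exactly such an argument via Jensen plus Cantelli. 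Your first two steps are sound and in fact more quantitative than the paper's: the Jensen step correctly turns the expected absolute gap of \Cref{thm:y_bound} into a one-sided bias bound whose constant is independent of $y_k^*$, and Cantelli correctly converts the variance bound of \Cref{thm:sub_optimality_gap} into a $1-\delta$ deviation bound carrying a $\sqrt{\cdot}$ penalty.

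The genuine gap is your final step. You claim that because $u \mapsto \sqrt{u}$ is strictly monotone, maximizing $y - C\sqrt{\Delta_{\mathrm{epistemic}}(y,\mathcal{D})}$ over $\mathcal{Y}$ and maximizing $\alpha(y,\mathcal{D}) = y - \Delta_{\mathrm{epistemic}}(y,\mathcal{D})$ select (near-)identical points because the Pareto frontier is preserved. Monotone transformations of the penalty do preserve the Pareto frontier of the bi-objective problem (maximize $y$, minimize $\Delta_{\mathrm{epistemic}}$), but they do not preserve which point of that frontier maximizes a given scalarization. Concretely, with candidates $(y,\Delta)=(1,0)$ and $(1.4,0.25)$, the linear penalty prefers the second ($1.15$ vs.\ $1$) while the square-root penalty with $C=1$ prefers the first ($1$ vs.\ $0.9$); both points are Pareto-optimal, yet the argmax flips. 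So "near-optimal" at this step would require a quantitative argument (e.g., a bound on the spread of $\Delta_{\mathrm{epistemic}}$ over the finite set $\mathcal{Y}$) that you do not supply. To be fair, the paper's own proof makes an analogous leap: it substitutes $\Delta_{\mathrm{epistemic}}(y_k,\mathcal{D})$ for the random gap $\Delta(p_\theta,y_k)$ even though \Cref{thm:sub_optimality_gap} only bounds the \emph{variance} of that gap by an affine function of the epistemic uncertainty. Your derivation is therefore no less rigorous than the paper's at the point where it matters, but as a self-contained proof of the stated implication, the passage from the $\sqrt{\cdot}$-penalized confidence bound to the linear UaE objective is where your argument breaks.
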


Hence, equipped with the novel design of UaE, \ours{} is a theoretically sound approach utilizing inverse modeling to effectively solve the online BBO problem.

\section{Experiments}
\begin{figure*}[b]
    \centering
    \includegraphics[width=\linewidth,trim={30, 30, 30, 30}]{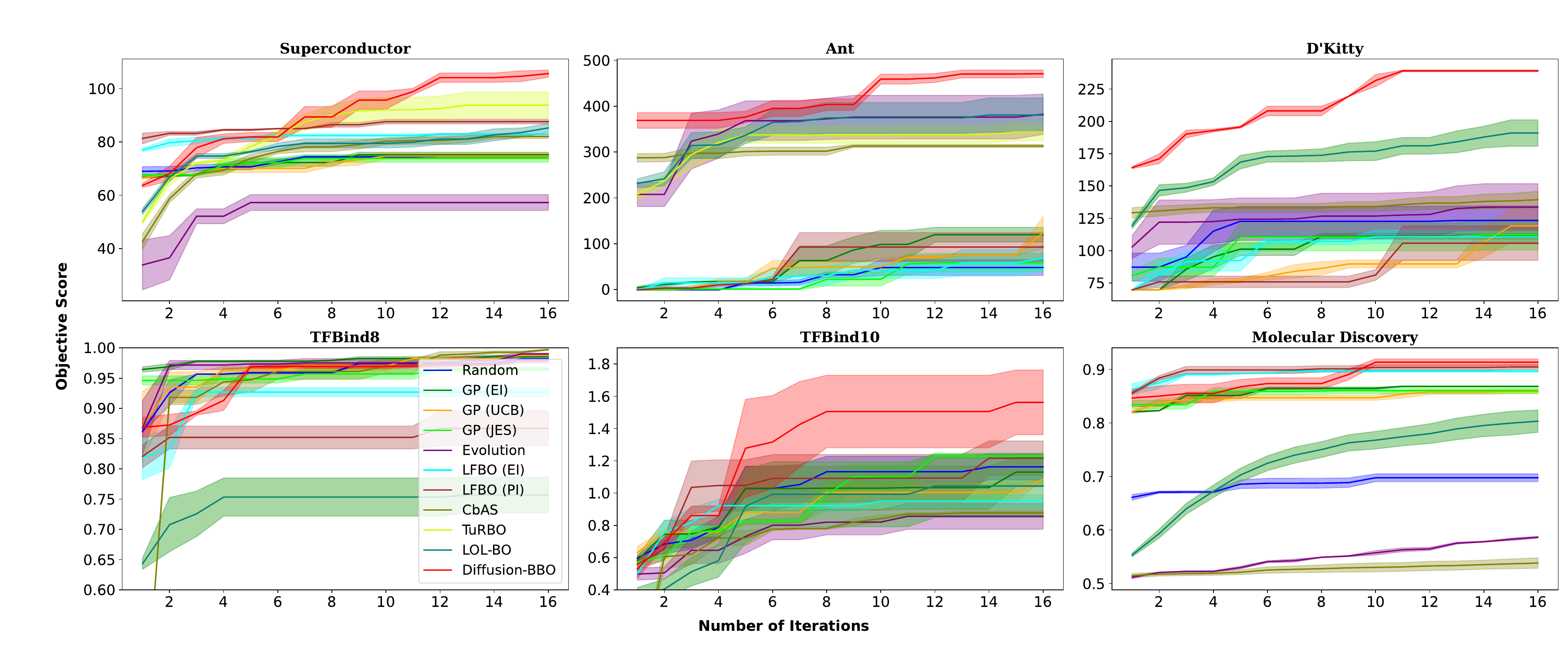}
    \caption{Comparison of \ours{} with baselines for online black-box optimization on DesignBench and Molecular Discovery task. All plots start at iteration 1 after one round of data queries. We plot the mean values and the confidence interval based on three random runs. \ours{} exhibits superior performance with respect to sample efficiency.}
    \label{fig:online_opt}
\end{figure*}

To validate the efficacy of \ours{}, we conduct experiments on six online black-box optimization tasks for both continuous and discrete optimization settings. 
We arrange the pre-collected data in ascending order based on objective values and select data from the 25th to the 50th percentile as the initial training dataset. The goal is to demonstrate that low-quality data, with below-average objective function values, is sufficient for the initial training stage of \ours{}.
Each optimization iteration is allocated $100$ queries to the oracle function (batch size $N=100$), with a total of $16$ iterations conducted. 
For the fixed set of weights, we use $\mathcal{W}=\{0.6,0.7,0.8,0.9,1.0\}$. The corresponding candidate set then becomes $\mathcal{Y}_k = \{y_k = w \cdot \phi_k \mid w \in \mathcal{W}\}$.
 The rationale behind this choice is that $w > 1$
 falls outside the training data distribution, which we also verify through the ablation study. For UaE computation, we take the logarithm of both terms to ensure they are within the same range.
More details of the experimental setup are provided in \Cref{sec:app_experiment}.

\subsection{Dataset}
\label{subsection:dataset}

We restructured $5$ high-dimensional scientific discovery tasks from Design-Bench to facilitate online black-box optimization. We test on $3$ continuous and $2$ discrete tasks. In \textbf{D’Kitty} and \textbf{Ant} Morphology, the goal is to optimize for the morphology of robots. In \textbf{Superconductor}, the aim is to optimize for finding a superconducting material with a high critical temperature. \textbf{TFBind8} and \textbf{TFBind10} are discrete tasks where the goal is to find a DNA sequence that has a maximum affinity to bind with a specified transcription factor. 
We also include the \textbf{Molecular Discovery} task to optimize a compound’s activity against a biological target with therapeutic value. 
More details of the dataset are provided in \Cref{sec:app_dataset}.

\begin{wrapfigure}{r}{0.45\textwidth}
    \centering
    \vspace{-10pt}
    \includegraphics[width=0.91\linewidth]{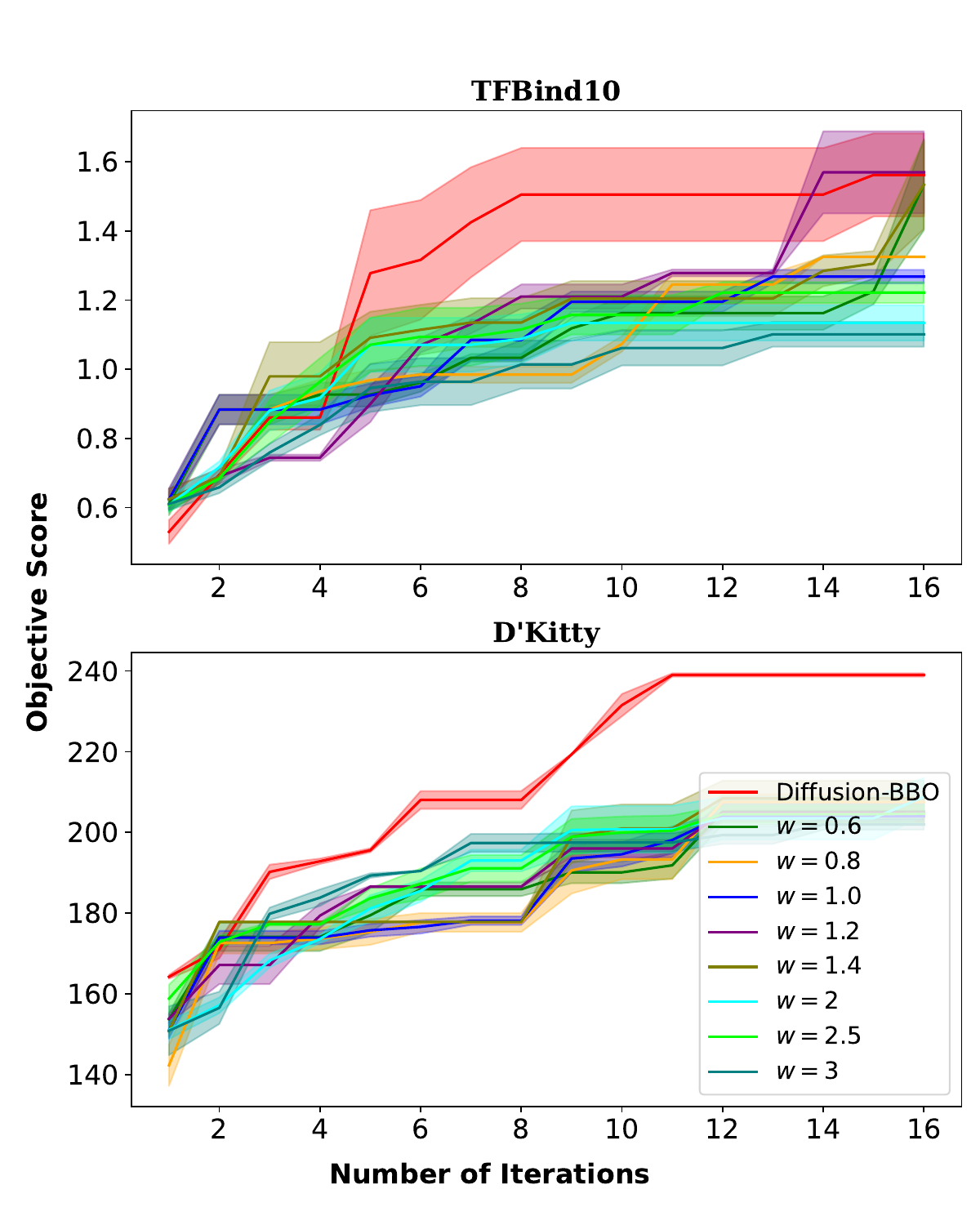}
    \caption{Impact of acquisition function design for black-box optimization on both discrete task (\textbf{TFBind10}) and continous task (\textbf{D’Kitty}). Comparison of \ours{} with UaE against the fixed conditioning approaches using weights $w \in \{0.6, 0.8, 1.0, 1.2, 1.4, 2.0, 2.5, 3.0\}$. Results averaged across three random runs.}
    \vspace{-10pt}
    \label{fig:ablation1}
\end{wrapfigure}

\subsection{Baselines}

We compare \ours{} with 10 baselines, including Bayesian optimization (BO), trust region BO (TuRBO) \citep{eriksson2019scalable}, local latent space Bayesian optimization (LOL-BO) \citep{maus2022local}, likelihood-free BO (LFBO) \citep{song2022general}, evolutionary algorithms \citep{brindle1980genetic, real2019regularized}, conditioning by adaptive sampling (CbAS) \citep{brookes2019conditioning}, and random sampling. For BO approaches, we include Gaussian Processes (GPs) with Monte Carlo (MC)-based batch expected improvement (EI), MC-based batch upper confidence bound (UCB) \citep{wilson2017reparameterization}, and joint entropy search (JES \citep{hvarfner2022joint} as the acquisition functions. For LFBO, we use EI and probability of improvement (PI) as the acquisition functions.

\subsection{Results}

Figure \ref{fig:online_opt} illustrates the performance across six datasets for all baselines and our proposed \ours{}. Notably, \ours{} utilizing UaE as the acquisition function consistently outperforms other baselines in both discrete and continuous settings, with the sole exception of the TF-BIND-8 task. These results align with our theoretical analysis, demonstrating the effectiveness of the designed UaE. Additionally, in the Ant and Dkitty tasks, \ours{} demonstrates a significant lead over all baseline methods, starting from the very first iteration of the online optimization process. This remarkable performance can be attributed to the conditional diffusion model of \ours{}, which can effectively learn the data manifold in the design space from the low-quality initial training dataset.
In contrast, the forward approach employed by BO and LFBO, which relies solely on optimizing the forward surrogate model, is more prone to converging on suboptimal solutions.

\subsection{Ablation study}
\label{subsection:ablation}

In this section, we conduct ablation studies to investigate the impact of our designed acquisition function, UaE. We compare \ours{} with the fixed conditioning approach for data querying. 
Instead of using UaE to dynamically select $w$ and the corresponding $y_k = w \cdot \phi_k$ for conditional sampling, the fixed conditioning approach predetermines a fixed weight $w$ and consistently generates new samples conditioned on $w \cdot \phi_k$.
As shown in Figure \ref{fig:ablation1}, \ours{} with UaE consistently outperforms the fixed conditioning approach. Furthermore, it can be found that simply conditioning on higher $w \cdot \phi_k$ by applying $w > 1$ does not enhance optimization performance. This highlights the effectiveness of UaE in identifying the optimal $y_k$ for conditioning by balancing between targeting higher objective values and minimizing the corresponding epistemic uncertainty.

Additionally, we analyze the computational time for model training and acquisition function computation for \ours{} and existing baselines, as shown in \Cref{tab:time_analysis}. The results indicate that the computational time for \ours{} is comparable to BO approaches using GP as the surrogate model, typically ranging from 1 to 2 minutes per iteration for model training and acquisition function computation. Given the context of online BBO, where querying the oracle to generate new data is the most expensive or time-consuming part, a few minutes spent on training and acquisition function computation should not be considered a significant computational burden.

Furthermore, we evaluate the effect of batch size $N$ (i.e., the number of queries per iteration) on \ours{}. The results can be found in Appendix \ref{section:ablation_appendices}. 

\begin{table*}[h]
\small
\centering
\resizebox{\textwidth}{!}{
\begin{tabular}{c|cccccccc}
\bottomrule[1.5pt]
\textbf{Task} & GP (EI) & GP (UCB) & Evolution & LFBO (EI) & LFBO (PI) & CbAS & TuRBO & \ours{}\\  \hline
TFBind8 & $112.92$ & $113.81$ & $0.0021$ & $0.59$ & $1.44$ & $0.075$ & $67.23$ & $136.29$ \\ 
Molecular Discovery & $53.14$ & $53.82$ & $0.0024$ & $1.93$ & $1.12$ & $0.023$ & $76.84$ & $69.44$ \\ 
\bottomrule[1.5pt]
\end{tabular}
}
\caption{Model training and acquisition function computation time in seconds.}
\label{tab:time_analysis}
\end{table*}

\section{Discussion \& Conclusion}
\label{section:conclude}

In this paper, we introduced \ours{}, a sample-efficient online BBO framework leveraging the conditional diffusion model as the inverse surrogate model.
By utilizing the novel acquisition function UaE, \ours{} strategically proposes the conditioning variable $y$ in the objective space to improve the sample efficiency. Our empirical evaluations on $6$ real-world scientific discovery tasks demonstrate that \ours{} achieves state-of-the-art performance, establishing its potential as a robust tool for efficient online black-box optimization. Theoretically, we prove that using UaE leads to optimal optimization solutions.
For future work, \ours{} can be extended to various Bayesian optimization settings, including multi-objective and multi-fidelity optimization, to develop an adaptive experimental design platform for scientific discovery tasks.



\section*{Acknowledgement}

This work was supported in part  by the U.S. Army Research Office
under Army-ECASE award W911NF-23-1-0231, the U.S. Department Of Energy, Office of Science, IARPA HAYSTAC Program, CDC-RFA-FT-23-0069, DARPA AIE FoundSci, NSF Grants \#2205093, \#2146343, \#2134274, \#2134209, and \#2112665 (TILOS).

\bibliography{ref}
\bibliographystyle{unsrtnat}

\newpage
\appendix
\onecolumn
\textbf{\Large{Appendices}}
\counterwithin{lemma}{section}

\section{Uncertainty Quantification Through SDEs}
\label{sec:app_uq}
\subsection{Conditional Diffusion SDE}
It can be shown that the conditional diffusion model can be represented by the Ornstein–Uhlenbeck (OU) process, which is a time-homogeneous continuous-time Markov process:
\begin{equation}
    \dif \boldsymbol{x}_t = - \gamma \boldsymbol{x}_t \dif t + \sigma \dif \boldsymbol{w}_t,
\end{equation}
where $\gamma$ is the relaxation rate, $\sigma$ is the strength of fluctuation, and $\boldsymbol{w}_t$ is the standard Wiener process (a.k.a., Brownian motion).  Both $\gamma$ and $\sigma$ are time-invariant. In particular, setting $\gamma = 1$ and $\sigma = \sqrt{2}$, we are able to establish that Denoising Diffusion Probabilistic Model (DDPM) is equivalent to OU process observed at discrete times. In the remaining text, we consider SDEs for general score-based diffusion models. The SDE of the forward process in conditional diffusion model can then be written as:
\begin{equation}
\label{eqn:sde_fordward}
    \dif \boldsymbol{x}_t = - \frac{1}{2} g(t) \boldsymbol{x}_t \dif t + \sqrt{g(t)} \dif \boldsymbol{w}_t, ~~ \boldsymbol{x}_0 \sim q( \boldsymbol{x} | y)
\end{equation}
where $g(t)$ is a nondecreasing weighting function that controls the speed of diffusion in the forward process and $g(t) > 0$. For simplicity of analysis, we fix $g(t) = 1$ for all $t \in [T]$.

The generation process of a conditional score-based diffusion model can be viewed as a particular discretization of the following
reverse-time SDE: 
\begin{equation}
\label{eqn:sde_backward}
    \dif \boldsymbol{x}_t = \left( \frac{1}{2} \boldsymbol{x}_t - \nabla_{\boldsymbol{x}_t} \log p(\boldsymbol{x}_t | y) \right) \dif t  + \dif \boldsymbol{w}_t, ~~ \boldsymbol{x}_0 \sim p(\boldsymbol{x}_T | y).
\end{equation}
In practice, the unknown ground truth conditional score $\nabla_{\boldsymbol{x}_t} \log p(\boldsymbol{x}_t | y)$ needs to be estimated with score networks. Let such estimator denoted by $s_\theta(\boldsymbol{x}, y, t)$, then the conditional sample generation is to simulate the following backward SDE:
\begin{equation}
\label{eqn:sde_backward_approx}
    \dif \boldsymbol{x}_t = \left( \frac{1}{2} \boldsymbol{x}_t - s_\theta(\boldsymbol{x}, y , t) \right) \dif t  + \dif \boldsymbol{w}_t, ~~ \boldsymbol{x}_0 \sim \mathcal{N}(\boldsymbol{0}, \boldsymbol{I}).
\end{equation}

In Bayesian settings, we sample a score function $\widetilde{s}_\theta(\boldsymbol{x}_t, y, t)$ from the probability distribution $p(s_\theta | \boldsymbol{x}_t, y, t, \mathcal{D}) = \mathcal{N}(s_\theta(\boldsymbol{x}_t, y, t), \Sigma_\theta(\boldsymbol{x}_t, y, t))$ with expected value $s_\theta(\boldsymbol{x}_t, y, t)$, and diagonal covariance $\Sigma_\theta(\boldsymbol{x}_t, y, t)$.


\subsection{Estimation of Uncertainty}
In this section, we quantify the uncertainty of a single conditional diffusion model in both discrete-time and continuous-time reverse process for \Cref{thm:sde_uncertainty}.

\subsubsection{Uncertainty in Discrete-time
Reverse Process}
We first proof the first statement of \Cref{thm:sde_uncertainty}. We consider the Euler discretization of \Cref{eqn:sde_backward_approx}, which leads to:
\begin{equation}
    \boldsymbol{x}_{t-1} = \frac{1}{2}\boldsymbol{x}_t + s_\theta(\boldsymbol{x}, y , t) + \boldsymbol{\epsilon}, ~~ \epsilon \sim \mathcal{N}(\boldsymbol{0}, \boldsymbol{I}).
\end{equation}
We thus have,
\begin{align}
    \mathrm{Var}(\boldsymbol{x}_{t-1}) 
    &= \frac{1}{4} \mathrm{Var}(\boldsymbol{x}_{t}) +   \mathrm{Var}(s_\theta(\boldsymbol{x}, y , t)) + \frac{1}{2} \mathrm{Cov}\left(\boldsymbol{x}_{t}, s_\theta(\boldsymbol{x}, y , t)\right) + I. \label{eqn:var_discrete}\\
    \E(\boldsymbol{x}_{t-1}) 
    &= \frac{1}{2} \E(\boldsymbol{x}_{t}) + \E (s_\theta(\boldsymbol{x}, y , t)). \label{eqn:exp_discrete}
\end{align}
Here $\mathrm{Cov}\left(\boldsymbol{x}_{t}, s_\theta(\boldsymbol{x}, y , t)\right)$ is the element-vise covariance between $\boldsymbol{x}_{t}$ and $s_\theta(\boldsymbol{x}, y , t)$. Note that we only need to consider the correlation between $\boldsymbol{x}_{t}$ and $s_\theta(\boldsymbol{x}, y , t)$ at the same time step. As a result, to estimate $\mathrm{Cov}\left(\boldsymbol{x}_{t}, s_\theta(\boldsymbol{x}, y , t)\right)$, we have,
\begin{align*}
    \mathrm{Cov}\left(\boldsymbol{x}_{t}, s_\theta(\boldsymbol{x}, y , t)\right) 
    &= \E\left[\left(\boldsymbol{x}_{t} - \E[\boldsymbol{x}_{t}]\right) \left(s_\theta(\boldsymbol{x}, y , t) - \E[s_\theta(\boldsymbol{x}, y , t)]\right)^\rT\right] \\
    &= \E\left[\boldsymbol{x}_{t} \circ s_\theta(\boldsymbol{x}, y , t)\right] - \E[\boldsymbol{x}_{t}]\circ\E[s_\theta(\boldsymbol{x}, y , t)] \\
    &= \E_{\boldsymbol{x}_{t}}\left[\boldsymbol{x}_{t} \circ s_\theta(\boldsymbol{x}, y , t)\right] - \E[\boldsymbol{x}_{t}]\circ\E_{\boldsymbol{x}_{t}}[s_\theta(\boldsymbol{x}_{t}, y , t)] \numberthis \label{eqn:discrete_cov}
\end{align*}
where $\circ$ is the Hadamard product and the third equality is by tower's rule. Substituting \Cref{eqn:discrete_cov} back to \Cref{eqn:var_discrete} completes the proof of the first part of \Cref{thm:sde_uncertainty}.

\subsubsection{Uncertainty in Continuous-time Reverse Process}
We now proof the second statement of \Cref{thm:sde_uncertainty}. To perform the uncertainty quantification for the continuous-time reverse process, we posit the following assumption.

\begin{assumption}
\label{ass:contin_sde}
    For valid $t \in [0,T]$, the generating process $\boldsymbol{x}_t$ in \Cref{eqn:sde_backward} is integrable and has finite second-order moments.
\end{assumption}

With \Cref{ass:contin_sde}, integrating \Cref{eqn:sde_backward} with respect to $t$ yields:
\begin{align}
\label{eqn:continuous_integ}
    \boldsymbol{x}_0 = \boldsymbol{x}_T - \int_{t=0}^T \left( \frac{1}{2} \boldsymbol{x}_t + \nabla_{\boldsymbol{x}_t} \log p(\boldsymbol{x}_t | y) \right) \dif t + \int_{t=0}^T \dif \boldsymbol{w}_t.
\end{align}
Applying the variance operator to both sides of  
\begin{align*}
    \mathrm{Var}(\boldsymbol{x}_0) 
    &= \mathrm{Var}(\boldsymbol{x}_T) + \mathrm{Var}\left( \int_{t=0}^T \left( \frac{1}{2} \boldsymbol{x}_t + \nabla_{\boldsymbol{x}_t} \log p(\boldsymbol{x}_t | y) \right) \dif t \right) + \mathrm{Var} \left(\int_{t=0}^T \dif \boldsymbol{w}_t \right) \\
    &= I + \mathrm{Var}\left( \int_{t=0}^T \left( \frac{1}{2} \boldsymbol{x}_t + \nabla_{\boldsymbol{x}_t} \log p(\boldsymbol{x}_t | y) \right) \dif t \right) + \E \left[ \left(\int_{t=0}^T \dif \boldsymbol{w}_t \right)^2 \right] - \left(\E \left[ \int_{t=0}^T \dif \boldsymbol{w}_t \right]\right)^2 \\
    &= (T + 1)I + \underbrace{\mathrm{Var}\left( \int_{t=0}^T \left( \frac{1}{2} \boldsymbol{x}_t + \nabla_{\boldsymbol{x}_t} \log p(\boldsymbol{x}_t | y) \right) \dif t \right)}_{V_1} \numberthis \label{eqn:var_x0},
\end{align*}
where the last equality follows the properties of Itô Integral and rules of stochastic calculus such that $(\dif \boldsymbol{w})^2 = \dif t$, $\E[ \int_{t=0}^T \dif \boldsymbol{w}_t ] = 0$. Hence, to provide an uncertainty estimate for $\boldsymbol{x}_0$, it remains to estimate the term $V_1$. Recall that the true score function $\nabla_{\boldsymbol{x}_t} \log p(\boldsymbol{x}_t | y)$ is approximated by $ s_\theta((\boldsymbol{x}_t, y, t) = -\boldsymbol{\epsilon}_\theta(\boldsymbol{x}_t, t, y) / \sigma_t$. For ease of notation, let $s_{\theta, t} = s_\theta(\boldsymbol{x}_t, y, t)$ and $\widetilde{s}_{\theta, t} = \widetilde{s}_\theta(\boldsymbol{x}_t, y, t)$, which gives 
\begin{align*}
    V_1
    &= \int_{t=0}^T \int_{s=0}^T \left( \frac{1}{4} \mathrm{Cov}(\boldsymbol{x}_s, \boldsymbol{x}_t) - \frac{1}{2} \mathrm{Cov}(\boldsymbol{x}_s,  s_{\theta, t}) - \frac{1}{2} \mathrm{Cov}(\boldsymbol{x}_t,  s_{\theta, s}) +  \mathrm{Cov}(s_{\theta, t},  s_{\theta, s}) \right)\dif s \dif t.
\end{align*}
When $s \neq t$, score functions $s_{\theta, t}$ and $s_{\theta, s}$ are independent, and similarly, $\boldsymbol{x}_t$ and $s_{\theta, s}$ are also independent. As a result, the above equation can be further simplified as
\begin{align*}
    V_1
    &= \int\limits_{t=0}^T \int\limits_{s=0}^T \left( \frac{1}{4} \mathrm{Cov}(\boldsymbol{x}_s, \boldsymbol{x}_t)  - \frac{1}{2} \mathrm{Cov}(\boldsymbol{x}_s,  s_{\theta, t}) \right)\dif s \dif t -  \int\limits_{t=0}^T \left( \mathrm{Cov}(\boldsymbol{x}_t,  s_{\theta, t}) + \mathrm{Cov}(s_{\theta, t},  s_{\theta, t}) \right) \dif t.
\end{align*}
Combining all the above results together completes the proof of the second statement of \Cref{thm:sde_uncertainty}.

\section{Analysis of Sub-optimality for Black-box Function}
\label{sec:app_sub_optimality}
In this section, we study the behavior of the sub-optimality gap of our algorithm by proving \Cref{thm:y_bound} and \Cref{thm:sub_optimality_gap}. We first introduce the notation that is used throughout this section and the next section. Then we present the main lemmas along with their proofs. Finally, we combine the lemmas to prove our main results.

At each iteration $k \in [K]$, let $y_k^*$ be the target function value on which the diffusion model conditions, 
and $p_\theta$ be the model learned by the conditional diffusion model. We define the performance metric for online BBO problem, which measures the sub-optimal performance gap between the function value achieved by sample $\boldsymbol{x} \sim p_{\theta}(\cdot|y_k^*, \mathcal{D})$ and the target function value $y_k^*$. Its formal definition is described as follows:
\begin{equation}
\label{eqn:sub_optimality_gap}
    \Delta(p_\theta, y_k^*) = \left| y_k^*- \max_{j \in [N]} f(\boldsymbol{x}_j) \right|, ~~ \text{where} ~~\boldsymbol{x}_j \sim p_{\theta}(\cdot|y_k^*, \mathcal{D}),~~\forall j \in [N].
\end{equation}



For simplicity of analysis, we consider $N=1$, and let the generated sample at the $k$-th iteration be $\boldsymbol{x}_k$ in the remaining text. We remark that all proofs go through smoothly for general $N$ with more 
nuanced notations, and do not affect the conclusions being drawn. To proceed with the proofs in this section, we first state the formal assumptions for the black-box function $f(\cdot)$ and sample $\boldsymbol{x}$.
\begin{assumption}
\label{ass:l_lipschitz}
    The scalar black-box function $f$ is L-Lipschitz in $\boldsymbol{x}$:
    \[
        \left| f(\boldsymbol{x}^\prime) - f(\boldsymbol{x})\right| \leq L \|\boldsymbol{x}^\prime - \boldsymbol{x}\|, ~~\forall \boldsymbol{x}^\prime, \boldsymbol{x} \in \R^d.
    \]
\end{assumption}

\begin{assumption}
\label{ass:subGaussian}
    Each generated sample $\boldsymbol{x} \in \R^d$ is $\sigma$-subGaussian. That is, there exists $\sigma \in \R$ such that for any $\boldsymbol{v} \in \R^{d}$ with $\lrn{\boldsymbol{v}} = 1$, $\boldsymbol{v}^{\rT} (\boldsymbol{x} - \E[\boldsymbol{x}])$ is $\sigma$-subGaussian, and its moment generating function is bounded by: 
    \[
       \E[\exp{\left(\lambda \boldsymbol{v}^{\rT} (\boldsymbol{x} - \E[\boldsymbol{x}])\right)}] \leq  \exp{\left( \frac{\sigma^2 \lambda^2}{2} \right)},~~~~\forall \lambda\in\R,~ \boldsymbol{v} \in \mathbb{S}^{d-1},
    \]
    where $\mathbb{S} := \{\boldsymbol{v} \in \R^d: \lrn{\boldsymbol{v}} = 1\}$ is the $(d-1)$ unit sphere.
\end{assumption}

Before proceeding with the proofs of main theorems, we present our main lemmas.

\begin{lemma}
\label{lem:bound_exp_l2norm}
    At each iteration $k \in [K]$, under fixed parameters $\theta$ and $\theta^*$, for $\boldsymbol{x}_k \sim p_\theta(\cdot|y_k^*, \mathcal{D})$, $\boldsymbol{x}^* \sim p_{\theta^*}(\cdot | y_k^*, \mathcal{D})$, we have
    \begin{align}
        \E_{\boldsymbol{x}_k \sim p_\theta(\cdot|y_k^*, \mathcal{D}), \boldsymbol{x}^* \sim p_{\theta^*}(\cdot | y_k^*, \mathcal{D})}\left[ \|\boldsymbol{x}^* - \boldsymbol{x}_k\| \right]
        &\leq 8 \sqrt{d} \sigma + \lrn{ \E_{\boldsymbol{x}^*}[\boldsymbol{x}^*] - \E_{\boldsymbol{x}_k}[\boldsymbol{x}_k] }, \\
        \E_{\boldsymbol{x}_k \sim p_\theta(\cdot|y_k^*, \mathcal{D}), \boldsymbol{x}^* \sim p_{\theta^*}(\cdot | y_k^*, \mathcal{D})}\left[ \|\boldsymbol{x}^* - \boldsymbol{x}_k\| \right]
        & \geq \lrn{ \E_{\boldsymbol{x}^*}[\boldsymbol{x}^*] - \E_{\boldsymbol{x}_k}[\boldsymbol{x}_k] }.
    \end{align}
\end{lemma}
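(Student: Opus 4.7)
The plan is to handle the two inequalities separately. The lower bound is essentially Jensen's inequality: since $\boldsymbol{x}^*$ and $\boldsymbol{x}_k$ are drawn independently from $p_{\theta^*}(\cdot \mid y_k^*, \mathcal{D})$ and $p_{\theta}(\cdot \mid y_k^*, \mathcal{D})$, the joint expectation of $\boldsymbol{x}^* - \boldsymbol{x}_k$ equals $\E[\boldsymbol{x}^*] - \E[\boldsymbol{x}_k]$, and applying Jensen to the convex map $\|\cdot\|$ gives $\E[\|\boldsymbol{x}^* - \boldsymbol{x}_k\|] \geq \|\E[\boldsymbol{x}^*] - \E[\boldsymbol{x}_k]\|$. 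This step is immediate.

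For the upper bound, I would first insert the two means and apply the triangle inequality:
\begin{equation*}
\|\boldsymbol{x}^* - \boldsymbol{x}_k\| \leq \|\boldsymbol{x}^* - \E[\boldsymbol{x}^*]\| + \|\E[\boldsymbol{x}^*] - \E[\boldsymbol{x}_k]\| + \|\boldsymbol{x}_k - \E[\boldsymbol{x}_k]\|.
\end{equation*}
Taking expectations over the product measure reduces the problem to bounding the two centered terms $\E[\|\boldsymbol{x}^* - \E[\boldsymbol{x}^*]\|]$ and $\E[\|\boldsymbol{x}_k - \E[\boldsymbol{x}_k]\|]$, each corresponding to the mean deviation of a $\sigma$-subGaussian vector in $\R^d$.

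To control a generic centered term $\E[\|\boldsymbol{x} - \E[\boldsymbol{x}]\|]$, I would invoke \Cref{ass:subGaussian} coordinatewise: taking $\boldsymbol{v} = \boldsymbol{e}_i$ shows each component $x_i - \E[x_i]$ is $\sigma$-subGaussian, hence $\Var(x_i) \leq \sigma^2$. Summing coordinatewise gives $\E[\|\boldsymbol{x} - \E[\boldsymbol{x}]\|^2] \leq d\sigma^2$, and then Jensen's inequality yields $\E[\|\boldsymbol{x} - \E[\boldsymbol{x}]\|] \leq \sqrt{d}\sigma$. Alternatively, one can route through the standard subGaussian concentration tail $\Pr[\|\boldsymbol{x} - \E[\boldsymbol{x}]\| \geq t] \leq 2\exp(-c t^2 / (d \sigma^2))$ and integrate the tail, which yields the same scaling with a larger absolute constant; the factor $8$ in the statement is just a convenient overestimate that absorbs any slack. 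Combining the triangle-inequality decomposition with the two centered bounds produces exactly $\E[\|\boldsymbol{x}^* - \boldsymbol{x}_k\|] \leq 8\sqrt{d}\sigma + \|\E[\boldsymbol{x}^*] - \E[\boldsymbol{x}_k]\|$.

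No step here is genuinely hard: the lower bound is a one-line Jensen argument, and the upper bound is the triangle inequality followed by a routine subGaussian moment bound. The only subtle point is keeping track of the independence structure between $\boldsymbol{x}^*$ and $\boldsymbol{x}_k$ so that splitting the joint expectation across the triangle inequality and folding $\|\E[\boldsymbol{x}^*] - \E[\boldsymbol{x}_k]\|$ out as a deterministic quantity is justified; after that, the rest is bookkeeping to match the stated constant.
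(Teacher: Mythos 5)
Your proposal is correct and follows essentially the same route as the paper: the same triangle-inequality decomposition into two centered deviations plus the deterministic mean gap, a subGaussian bound on each centered term, and Jensen for the lower bound. The only difference is that the paper invokes a cited lemma giving $\E[\|\boldsymbol{x}-\E[\boldsymbol{x}]\|]\le 4\sigma\sqrt{d}$ (hence the constant $8$), whereas you derive the sharper bound $\sqrt{d}\sigma$ per term elementarily via coordinatewise variances and Jensen, which still lands comfortably inside the stated constant.
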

\begin{proof} [Proof of \Cref{lem:bound_exp_l2norm}]
    To bound $\E\left[ \|\boldsymbol{x}^* - \boldsymbol{x}_k\| \right]$, by triangle inequality,
    \begin{align*}
         \E_{\boldsymbol{x}_k, \boldsymbol{x}^*}\left[ \|\boldsymbol{x}^* - \boldsymbol{x}_k\| \right] 
         &= \E\left[ \|\boldsymbol{x}^* - \E[\boldsymbol{x}^*] + \E[\boldsymbol{x}_k] -  \boldsymbol{x}_k + \E[\boldsymbol{x}^*] - \E[\boldsymbol{x}_k]\| \right] \\
         &\leq \E\left[ \lrn{\boldsymbol{x}^* - \E[\boldsymbol{x}^*]} \right] + \E\left[ \lrn{\boldsymbol{x}_k - \E[\boldsymbol{x}_k]} \right] + \E\left[ \lrn{ \E[\boldsymbol{x}^*] - \E[\boldsymbol{x}_k] } \right].
    \end{align*}
    Under \cref{ass:subGaussian}, by \Cref{lem:subGau_exp_norm}, we have,
    \begin{align*}
        \E_{\boldsymbol{x}_k, \boldsymbol{x}^*}\left[ \|\boldsymbol{x}^* - \boldsymbol{x}_k\| \right]
        \leq 8 \sqrt{d} \sigma + \lrn{ \E[\boldsymbol{x}^*] - \E[\boldsymbol{x}_k] }.
    \end{align*}
    Applying triangle inequality completes the step.
    In addition, it can be easily seen that
    \[
        \E_{\boldsymbol{x}_k, \boldsymbol{x}^*}\left[ \|\boldsymbol{x}^* - \boldsymbol{x}_k\| \right]
        \geq \lrn{\E[\boldsymbol{x}^*] - \E[\boldsymbol{x}_k]}.
    \]
\end{proof}

\begin{lemma}
\label{lem:bound_var_l2norm}
    At each iteration $k \in [K]$, under fixed parameters $\theta$ and $\theta^*$, for $\boldsymbol{x}_k \sim p_\theta(\cdot|y_k^*, \mathcal{D})$, $\boldsymbol{x}^* \sim p_{\theta^*}(\cdot | y_k^*, \mathcal{D})$, we have
    \begin{equation}
        \mathrm{Var}_{\boldsymbol{x}_k \sim p_\theta(\cdot|y_k^*, \mathcal{D}), \boldsymbol{x}^* \sim p_{\theta^*}(\cdot | y_k^*, \mathcal{D})}(\lrn{\boldsymbol{x}^* - \boldsymbol{x}_k}) \leq c_3 d \sigma^2.
    \end{equation}
\end{lemma}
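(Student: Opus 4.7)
The plan is to reduce the bound on $\mathrm{Var}(\lrn{\boldsymbol{x}^* - \boldsymbol{x}_k})$ to a bound on the trace of the covariance of $Z := \boldsymbol{x}^* - \boldsymbol{x}_k$ via a standard Lipschitz--variance inequality, and then control that trace coordinatewise using the sub-Gaussian assumption (\Cref{ass:subGaussian}).

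First I would record the elementary variational inequality: for any function $f$ and any constant $c$, $\mathrm{Var}(f(Z)) = \E[(f(Z) - \E f(Z))^2] \leq \E[(f(Z) - c)^2]$ by the minimization property of the mean. Applying this with $f(z) = \|z\|$ (which is $1$-Lipschitz) and $c = f(\E Z) = \lrn{\E Z}$ gives
\begin{equation*}
    \mathrm{Var}(\lrn{Z}) \;\leq\; \E\bigl[(\lrn{Z} - \lrn{\E Z})^2\bigr] \;\leq\; \E\bigl[\lrn{Z - \E Z}^2\bigr] \;=\; \mathrm{tr}(\mathrm{Cov}(Z)).
\end{equation*}

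Next I would decompose $Z - \E Z = (\boldsymbol{x}^* - \E[\boldsymbol{x}^*]) - (\boldsymbol{x}_k - \E[\boldsymbol{x}_k])$ and exploit the independence of $\boldsymbol{x}^*$ and $\boldsymbol{x}_k$ (they come from two different model parameter draws $\theta^*, \theta$ conditioned on the same fixed dataset). The cross term vanishes in expectation, yielding
\begin{equation*}
    \E\bigl[\lrn{Z - \E Z}^2\bigr] \;=\; \mathrm{tr}(\mathrm{Cov}(\boldsymbol{x}^*)) + \mathrm{tr}(\mathrm{Cov}(\boldsymbol{x}_k)).
\end{equation*}

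Finally I would bound each trace coordinatewise. By \Cref{ass:subGaussian} applied with the standard basis vector $\boldsymbol{v} = e_i$ (noting $\lrn{e_i} = 1$), each coordinate $e_i^\rT(\boldsymbol{x}^* - \E[\boldsymbol{x}^*])$ is $\sigma$-subGaussian, hence $\mathrm{Var}(e_i^\rT \boldsymbol{x}^*) \leq \sigma^2$; summing over $i \in [d]$ gives $\mathrm{tr}(\mathrm{Cov}(\boldsymbol{x}^*)) \leq d\sigma^2$, and likewise for $\boldsymbol{x}_k$. Combining yields $\mathrm{Var}(\lrn{\boldsymbol{x}^* - \boldsymbol{x}_k}) \leq 2 d \sigma^2$, so the conclusion holds with $c_3 = 2$.

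I do not anticipate a real obstacle here: the only subtlety is invoking the Lipschitz-variance inequality via the two-step $\mathrm{Var}(f(Z)) \leq \E[(f(Z) - f(\E Z))^2] \leq \E[\lrn{Z - \E Z}^2]$ sandwich rather than trying to compute $\E[\lrn{Z}]$ and $\E[\lrn{Z}^2]$ directly; doing it directly would force one to cancel a large mean term $\lrn{\E[\boldsymbol{x}^*] - \E[\boldsymbol{x}_k]}^2$, which could be of order much larger than $d\sigma^2$, and would not produce a clean bound.
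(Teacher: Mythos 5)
Your proof is correct, and it takes a genuinely different route from the paper's. The paper computes the two moments separately: it expands $\E[\lrn{\boldsymbol{x}^*-\boldsymbol{x}_k}^2]$ using independence, bounds it by $2Cd\sigma^2 + \lrn{\E[\boldsymbol{x}_k]-\E[\boldsymbol{x}^*]}^2$ via its Lemma~\ref{lem:subGau_var}, and then must cancel the mean-gap term $\lrn{\E[\boldsymbol{x}_k]-\E[\boldsymbol{x}^*]}^2$ by invoking the lower bound $\E[\lrn{\boldsymbol{x}^*-\boldsymbol{x}_k}] \geq \lrn{\E[\boldsymbol{x}^*]-\E[\boldsymbol{x}_k]}$ from Lemma~\ref{lem:bound_exp_l2norm} when subtracting $(\E[\lrn{\boldsymbol{x}^*-\boldsymbol{x}_k}])^2$. (As printed, the paper's final display actually subtracts the square of the \emph{upper} bound $8\sqrt{d}\sigma + \lrn{\E[\boldsymbol{x}^*]-\E[\boldsymbol{x}_k]}$ rather than the lower bound, which is not a valid step in an upper-bound chain; the stated conclusion is still reachable by substituting the lower bound instead.) You sidestep this cancellation entirely by the variational inequality $\mathrm{Var}(\lrn{Z}) \leq \E[(\lrn{Z}-\lrn{\E Z})^2] \leq \E[\lrn{Z-\E Z}^2] = \mathrm{tr}(\mathrm{Cov}(Z))$, after which independence and the coordinatewise sub-Gaussian variance bound give $2d\sigma^2$ with an explicit constant. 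Your route is shorter, does not need Lemma~\ref{lem:bound_exp_l2norm} at all, and avoids the sign issue in the paper's last step; what the paper's route buys is the explicit second-moment expansion in \Cref{eqn:var_norm_3}, which it reuses verbatim inside the proof of \Cref{thm:sub_optimality_gap} to relate $T_2$ to the epistemic variance, so the longer computation is not wasted there.
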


\begin{proof} [Proof of \Cref{lem:bound_var_l2norm}]
    By definition of variance, 
    \begin{equation}
    \label{eqn:var_norm_1}
        \mathrm{Var}_{\boldsymbol{x}_k, \boldsymbol{x}^*}(\lrn{\boldsymbol{x}^* - \boldsymbol{x}_k})
        = \E[\lrn{\boldsymbol{x}^* - \boldsymbol{x}_k}^2] - (\E[\lrn{\boldsymbol{x}^* - \boldsymbol{x}_k}])^2.
    \end{equation}
    Expanding the first term leads to
    \begin{align*}
        \E[\lrn{\boldsymbol{x}^* - \boldsymbol{x}_k}^2]
        &= \E[(\boldsymbol{x}^* - \boldsymbol{x}_k)^\rT(\boldsymbol{x}^* - \boldsymbol{x}_k)] \\
        &= \E[\lrn{\boldsymbol{x}^*}^2] + \E[\lrn{\boldsymbol{x}_k}^2] - 2 \E[(\boldsymbol{x}_k)^\rT \boldsymbol{x}^* ] \\
        &= \E[\lrn{\boldsymbol{x}^*}^2] + \E[\lrn{\boldsymbol{x}_k}^2] - 2 \E[(\boldsymbol{x}_k)]^\rT \E[\boldsymbol{x}^*] \numberthis\label{eqn:var_norm_2},
    \end{align*}
    where the last equality is due to the independece between $\boldsymbol{x}^*$ and $\boldsymbol{x}_k$. 
    
    Under \Cref{ass:subGaussian} and by \Cref{lem:subGau_var}, we have
    \begin{align*}
        \E[\lrn{\boldsymbol{x}^*}^2]
        &= \E[\lrn{\boldsymbol{x}^* - \E[\boldsymbol{x}^*] + \E[\boldsymbol{x}^*]}^2] \\
        &= \E[(\boldsymbol{x}^* - \E[\boldsymbol{x}^*])^\rT (\boldsymbol{x}^* - \E[\boldsymbol{x}^*])] + \lrn{\E[\boldsymbol{x}^*]}^2 \\
        &= \mathrm{tr}(\E[(\boldsymbol{x}^* - \E[\boldsymbol{x}^*]) (\boldsymbol{x}^* - \E[\boldsymbol{x}^*])]^\rT) + \lrn{\E[\boldsymbol{x}^*]}^2 \\
        &\leq C d \sigma^2 + \lrn{\E[\boldsymbol{x}^*]}^2.
    \end{align*}
    Here, the second equality holds as the cross terms vanish due to the fact that $\E[\boldsymbol{x}^* - \E[\boldsymbol{x}^*]] = 0$.
    Similarly,
    \[
        \E[\lrn{\boldsymbol{x}_k}^2] \leq C d \sigma^2 + \lrn{\E[\boldsymbol{x}_k]}^2.
    \]
    Substituting the above two results back to \Cref{eqn:var_norm_2},
    \begin{align*}
        \E[\lrn{\boldsymbol{x}^* - \boldsymbol{x}_k}^2]
        &\leq 2C d \sigma^2 + \lrn{\E[\boldsymbol{x}_k]}^2 + \lrn{\E[\boldsymbol{x}^*]}^2 - 2 \E[(\boldsymbol{x}_k)^\rT \boldsymbol{x}^* ] \\
        &\leq 2C d \sigma^2 + \lrn{\E[\boldsymbol{x}_k] - \E[\boldsymbol{x}^*]}^2 \numberthis\label{eqn:var_norm_3}.
    \end{align*}
    Substituting \Cref{eqn:var_norm_3} back to \Cref{eqn:var_norm_1} and applying \Cref{lem:bound_exp_l2norm} leads to
    \begin{align*}
        \mathrm{Var}_{\boldsymbol{x}_k, \boldsymbol{x}^*}(\lrn{\boldsymbol{x}^* - \boldsymbol{x}_k})
        &\leq 2C d \sigma^2 + \lrn{\E[\boldsymbol{x}_k] - \E[\boldsymbol{x}^*]}^2 - (8 \sqrt{d} \sigma + \lrn{ \E[\boldsymbol{x}^*] - \E[\boldsymbol{x}_k]})^2 \leq  c_3 d \sigma^2.
    \end{align*}
\end{proof}

With the above results, we are ready to prove \Cref{thm:y_bound} and \Cref{thm:sub_optimality_gap}.

\yBoundExp*
\begin{proof} [Proof of \Cref{thm:y_bound}]
    Recall that we consider the case where $N=1$, and denote $\boldsymbol{x}_k$ the generated sample in the $k$-th iteration, i.e. $\boldsymbol{x}_k \sim p_\theta(\cdot|y_k^*, \mathcal{D})$, where $\theta \sim p(\theta \mid \mathcal{D})$. In each iteration $k$, with the existence of $\theta^* \sim p(\theta \mid \mathcal{D})$, we have $y_k^* = f(\boldsymbol{x}^*)$, where $\boldsymbol{x}^* \sim p_{\theta^*}(\cdot | y_k^*, \mathcal{D})$. Hence, under \Cref{ass:l_lipschitz},
    \begin{align*}
         \E\left[{\Delta(p_\theta, y_k^*)}\right]
         = \E\left[\left|f(\boldsymbol{x}^*) - f(\boldsymbol{x}_k )\right|\right] \leq L \E\left[ \|\boldsymbol{x}^* - \boldsymbol{x}_k\| \right].
    \end{align*}

    By \Cref{lem:bound_exp_l2norm}, tower rule and \Cref{lem:posterior}, we have
    \begin{align*}
        \E\left[{\Delta(p_\theta, y_k^*)}\right] 
        &\leq L \E_{\theta, \theta^*}\left[ \E_{x, x^*} \left[\|\boldsymbol{x}^* - \boldsymbol{x}_k\| \right] | \theta, \theta^* \right] \\
        &\leq 8 L \sqrt{d} \sigma + \E_{\theta, \theta^*} \left[ \lrn{ \E_{\boldsymbol{x}^*}[\boldsymbol{x}^* | \theta^* ] - \E_{\boldsymbol{x}_k}[\boldsymbol{x}_k | \theta ] } \right] \\
        &\leq c_1 L \sqrt{d} \sigma.
    \end{align*}
\end{proof}

\yBoundVar*

\begin{proof} [Proof of \Cref{thm:sub_optimality_gap}]
    At every iteration $k \in [K]$, let the target function value on which the conditional diffusion model conditions be $y_k^*$. The statement needs to hold for each conditional diffusion model in the ensemble, and thus for simplicity of notation, the subscript $i$ of $\theta_i$ is dropped in the remaining proof. With the existence of $\theta^* \sim p(\theta \mid \mathcal{D})$, we have $y_k^* = f(\boldsymbol{x}^*)$, where $\boldsymbol{x}^* \sim p_{\theta^*}(\cdot | y_k^*, \mathcal{D})$. Recall that $f(\boldsymbol{x}_k)$ is achieved by $\boldsymbol{x}_k \sim p_\theta(\cdot|y_k^*, \mathcal{D})$, where $\theta \sim p(\theta \mid \mathcal{D})$, and $N=1$. 
    
    Thus, by Eve's law, the overall variance of $\Delta(p_\theta, y_k^*)$ can be decomposed as:
    \begin{align*}
        \mathrm{Var}\left(\Delta(p_\theta, y_k^*)\right)
        &= \mathrm{Var}\left(|y_k^* - f(\boldsymbol{x}_k)|\right) \\
        &= \mathrm{Var}\left(|f(\boldsymbol{x}^*) - f(\boldsymbol{x}_k)|\right) \\
        &= \underbrace{\E_{\theta, \theta^*}\left[ \mathrm{Var}_{\boldsymbol{x}_k, \boldsymbol{x}^*}(|f(\boldsymbol{x}^*) - f(\boldsymbol{x}_k)|~\rvert~\theta, \theta^*) \right]}_{T_1} + \underbrace{\mathrm{Var}_{\theta, \theta^*}(\E_{\boldsymbol{x}_k, \boldsymbol{x}^*}[|f(\boldsymbol{x}^*) - f(\boldsymbol{x}_k)|~\rvert~\theta, \theta^*])}_{T_2}.
    \end{align*}
   In particular, the first term $T_1$ corresponds to the aleatoric component and the second term $T_2$ corresponds to the episdemic component. We then proceed to bound the above two terms separately. 

    \textbf{Step 1: bound $T_1$.}
    Under \Cref{ass:l_lipschitz},
    \[
        \mathrm{Var}_{\boldsymbol{x}_k, \boldsymbol{x}^*}(|f(\boldsymbol{x}^*) - f(\boldsymbol{x}_k)|~\rvert~\theta, \theta^*) \leq L^2 \mathrm{Var}_{\boldsymbol{x}_k, \boldsymbol{x}^*}(\lrn{\boldsymbol{x}^* - \boldsymbol{x}_k}|~\rvert~\theta, \theta^*).
    \]
    Under \Cref{ass:subGaussian} and by \Cref{lem:bound_var_l2norm},
    \begin{equation}
    \label{eqn:bound_var_1}
        T_1 \leq L^2 \E_{\theta, \theta^*}[\mathrm{Var}_{\boldsymbol{x}_k, \boldsymbol{x}^*}(\lrn{\boldsymbol{x}^* - \boldsymbol{x}_k}|~\rvert~\theta, \theta^*)] \leq c_3 L^2 d\sigma^2.
    \end{equation}
    \textbf{Step 2: bound $T_2$.}
    Under \Cref{ass:l_lipschitz},
   \[
        T_2
        \leq L^2 \mathrm{Var}_{\theta, \theta^*}(\E_{\boldsymbol{x}_k, \boldsymbol{x}^*}[\lrn{\boldsymbol{x}^* - \boldsymbol{x}_k}|~\rvert~\theta, \theta^*]))
   \]
   
    By \Cref{lem:bound_exp_l2norm}, 
    \begin{align*}
        \mathrm{Var}_{\theta, \theta^*}(\E_{\boldsymbol{x}_k, \boldsymbol{x}^*}[|f(\boldsymbol{x}^*) - f(\boldsymbol{x}_k)|~\rvert~\theta, \theta^*])
        &\leq \mathrm{Var}_{\theta, \theta^*}\left(\E_{\theta, \theta^*}\left[8 \sqrt{d} \sigma + \lrn{ \E_{\boldsymbol{x}^*}[\boldsymbol{x}^*] - \E_{\boldsymbol{x}_k}[\boldsymbol{x}_k] }\right]\right) \\
        & \leq \mathrm{Var}_{\theta, \theta^*}\left(\lrn{ \E_{\boldsymbol{x}^*}[\boldsymbol{x}^*] - \E_{\boldsymbol{x}_k}[\boldsymbol{x}_k] }\right). 
    \end{align*}

    Then by property of variance, we have
    \begin{align*}
        \mathrm{Var}_{\theta, \theta^*}\left(\lrn{ \E_{\boldsymbol{x}^*}[\boldsymbol{x}^*] - \E_{\boldsymbol{x}_k}[\boldsymbol{x}_k] }\right)
        &= \E_{\theta, \theta^*}\left[ \lrn{ \E_{\boldsymbol{x}^*}[\boldsymbol{x}^*] - \E_{\boldsymbol{x}_k}[\boldsymbol{x}_k] }^2 \right] - \Big(\E_{\theta, \theta^*} \Big[ \lrn{ \E_{\boldsymbol{x}^*}[\boldsymbol{x}^*] - \E_{\boldsymbol{x}_k}[\boldsymbol{x}_k] } \Big]\Big)^2.
    \end{align*}
    From the proof of \Cref{lem:bound_var_l2norm}, we have
    \begin{align*}
        &\quad\E_{\theta, \theta^*}\Big[\lrn{ \E_{\boldsymbol{x}^*}[\boldsymbol{x}^* | \theta^*] - \E_{\boldsymbol{x}_k}[\boldsymbol{x}_k | \theta] }^2 \Big] \\
        &= \E_{\theta^*}[\E_{\boldsymbol{x}^*}[\lrn{\boldsymbol{x}^*}^2 | \theta^*]] + \E_{\theta}[\E_{\boldsymbol{x}_k}[\lrn{\boldsymbol{x}_k}^2 |\theta]] - 2 \E_{\theta, \theta^*}[\E_{\boldsymbol{x}_k}[(\boldsymbol{x}_k|\theta)]^\rT \E_{\boldsymbol{x}^*}[\boldsymbol{x}^*|\theta^*]] \\
        &= 2(\E_{\theta}[\E_{\boldsymbol{x}_k}[\lrn{\boldsymbol{x}_k}^2 |\theta]] - \E_{\theta, \theta^*}[\E_{\boldsymbol{x}_k}[(\boldsymbol{x}_k|\theta)]^\rT \E_{\boldsymbol{x}^*}[\boldsymbol{x}^*|\theta^*]]) \\
        &= 2(\E_{\theta}[\E_{\boldsymbol{x}_k}[\lrn{\boldsymbol{x}_k}^2 |\theta]] - \lrn{\E_{\theta}[\E_{\boldsymbol{x}_k}[\boldsymbol{x}_k | \theta]]}^2) \\
        & = 2 \mathrm{Var}_{\theta}(\E_{\boldsymbol{x}_k}[\lrn{\boldsymbol{x}_k]}),
    \end{align*}
    where the third equality is by the law of total expectation and the fact that $\E_{\theta}[\E_{\boldsymbol{x}_k}[\boldsymbol{x}_k | \theta]] = \E_{\theta^*}[\E_{\boldsymbol{x}^*}[\boldsymbol{x}^* | \theta^*]]$. Combining the above results, we have
    \begin{equation}
    \label{eqn:bound_var_2}
        T_2 \leq L^2 \mathrm{Var}_{\theta, \theta^*}\left(\lrn{ \E_{\boldsymbol{x}^*}[\boldsymbol{x}^*] - \E_{\boldsymbol{x}_k}[\boldsymbol{x}_k] }\right)
        \leq 2 L^2 \mathrm{Var}_{\theta}(\E_{\boldsymbol{x}_k}[\lrn{\boldsymbol{x}_k]}).
    \end{equation}
    Combining \Cref{eqn:bound_var_1} and \Cref{eqn:bound_var_2} completes the proof:
    \begin{align*}
        \mathrm{Var}\left(\Delta(p_\theta, y_k^*)\right) 
        \leq c_3 L^2 d \sigma^2 + 2 L^2 \text{Var}_\theta(\E_{\boldsymbol{x}}[\lrn{\boldsymbol{x}_k}]).
    \end{align*} 
\end{proof}



\subsection{Supporting Lemmas}
\begin{lemma} [\cite{wainwright2019high}]
\label{lem:subGau_exp_norm}
    Let $\boldsymbol{x} \in \R^d$ be a $\sigma$-subGaussian ramdom vector, then
    \begin{equation}
        \E[\lrn{\boldsymbol{x} - \E[\boldsymbol{x}]}] \leq 4 \sigma \sqrt{d}.
    \end{equation}
\end{lemma}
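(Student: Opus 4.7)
The plan is to reduce the expected norm to a coordinate-wise second-moment estimate via Jensen's inequality, and then invoke the standard fact that a zero-mean $\sigma$-subGaussian scalar has variance at most $\sigma^2$. This route avoids any concentration machinery and already yields a bound sharper than the one claimed, from which the stated constant $4$ follows immediately.

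Concretely, the first step would be to apply Jensen's inequality to the concave square root and expand the squared norm coordinate-wise:
\[
    \E[\lrn{\boldsymbol{x} - \E[\boldsymbol{x}]}] \leq \sqrt{\E[\lrn{\boldsymbol{x} - \E[\boldsymbol{x}]}^2]} = \sqrt{\sum_{i=1}^d \E[(x_i - \E[x_i])^2]}.
\]
Next, for each $i$, I would specialize \Cref{ass:subGaussian} to $\boldsymbol{v} = \boldsymbol{e}_i$ to conclude that $x_i - \E[x_i]$ is a zero-mean $\sigma$-subGaussian scalar. A standard argument (differentiate the MGF bound $\E[\exp(\lambda(x_i - \E[x_i]))] \leq \exp(\sigma^2 \lambda^2 / 2)$ twice at $\lambda = 0$, or equivalently compare Taylor expansions) gives $\E[(x_i - \E[x_i])^2] \leq \sigma^2$. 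Summing over $i$ and taking the square root yields $\E[\lrn{\boldsymbol{x} - \E[\boldsymbol{x}]}] \leq \sigma\sqrt{d} \leq 4\sigma\sqrt{d}$.

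There is no real obstacle with this direct route; the only care needed is to check that the scalar subGaussian parameter in \Cref{ass:subGaussian} matches the standard Hoeffding-type normalization, so that the variance bound $\E[X^2] \leq \sigma^2$ applies verbatim. If instead one wishes to extract the constant $4$ from a more robust covering-based argument (useful when the functional inside the norm is replaced by a general supremum), the main step would be to build an $\epsilon$-net $\mathcal{N}_{1/2} \subset \mathbb{S}^{d-1}$ with $|\mathcal{N}_{1/2}| \leq 5^d$, use the approximation inequality $\lrn{\boldsymbol{z}} \leq 2\max_{\boldsymbol{v} \in \mathcal{N}_{1/2}} \lrw{\boldsymbol{v}, \boldsymbol{z}}$, and apply the maximal inequality $\E[\max_{i \leq N} Z_i] \leq \sigma\sqrt{2\log N}$ for centered $\sigma$-subGaussian variables; since $2\sqrt{2\log 5} < 4$, this gives $\E[\lrn{\boldsymbol{x} - \E[\boldsymbol{x}]}] \leq 2\sigma\sqrt{2 d \log 5} \leq 4\sigma\sqrt{d}$, matching the statement exactly.
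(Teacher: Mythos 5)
Your proposal is correct, and in fact the paper offers no proof of this lemma at all---it is imported verbatim from \cite{wainwright2019high}---so there is nothing internal to compare against. Your Jensen route is the cleanest way to get the stated inequality: the bound $\E[\lrn{\boldsymbol{x}-\E[\boldsymbol{x}]}]\le\sqrt{\E[\lrn{\boldsymbol{x}-\E[\boldsymbol{x}]}^2]}=\sqrt{\sum_i \E[(x_i-\E[x_i])^2]}$ is valid, specializing \Cref{ass:subGaussian} to $\boldsymbol{v}=\boldsymbol{e}_i$ legitimately makes each coordinate a centered $\sigma$-subGaussian scalar, and the second-moment bound $\E[X^2]\le\sigma^2$ follows from the MGF comparison (e.g.\ via $\cosh(\lambda X)-1\ge \lambda^2X^2/2$ and $\lambda\to 0$, which sidesteps any interchange-of-limits worry). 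This yields the sharper constant $1$ in place of $4$, so the lemma follows a fortiori. Your remark about where the constant $4$ actually comes from is also on target: the one-step discretization $\lrn{\boldsymbol{z}}\le 2\max_{\boldsymbol{v}\in\mathcal{N}_{1/2}}\lrw{\boldsymbol{v},\boldsymbol{z}}$ with $|\mathcal{N}_{1/2}|\le 5^d$ plus the subGaussian maximal inequality gives $2\sigma\sqrt{2d\log 5}<4\sigma\sqrt{d}$, which is the textbook derivation the citation points to; that argument is more robust (it generalizes to suprema over arbitrary index sets) but is strictly unnecessary here, where the index set is the full sphere and the second moment is available exactly. Either route fully supports the uses of the lemma in \Cref{lem:bound_exp_l2norm} and \Cref{thm:y_bound}.
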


\begin{lemma}
\label{lem:subGau_var}
    Let $\boldsymbol{x} \in \R^d$ be a $\sigma$-subGaussian ramdom vector, then its variance satisfies:
    \begin{equation}
        \text{Var}[\boldsymbol{x}] \leq C d \sigma^2,
    \end{equation}
    where C is some positive constant.
\end{lemma}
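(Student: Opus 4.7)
The plan is to interpret $\mathrm{Var}[\boldsymbol{x}]$ as the trace of the covariance matrix $\Sigma_{\boldsymbol{x}} = \E[(\boldsymbol{x}-\E[\boldsymbol{x}])(\boldsymbol{x}-\E[\boldsymbol{x}])^{\rT}]$ (this is the standard ``total variance'' notion for a random vector, and is the one used in the analysis of $\E[\lrn{\boldsymbol{x}}^2]$ in the proof of \Cref{lem:bound_var_l2norm}, where the step $\mathrm{tr}(\E[(\boldsymbol{x}^*-\E[\boldsymbol{x}^*])(\boldsymbol{x}^*-\E[\boldsymbol{x}^*])^{\rT}]) \leq Cd\sigma^2$ is exactly what this lemma supplies). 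Thus it suffices to show that $\mathrm{tr}(\Sigma_{\boldsymbol{x}}) \leq C d \sigma^2$ for a universal constant $C$.

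The core step is the scalar fact that a mean-zero $\sigma$-subGaussian random variable $Z$ satisfies $\mathrm{Var}(Z) = \E[Z^2] \leq \sigma^2$. This is standard (and follows, for example, by twice-differentiating the subGaussian MGF bound $\E[\exp(\lambda Z)] \leq \exp(\sigma^2 \lambda^2/2)$ at $\lambda=0$, or equivalently by comparing the Taylor expansions of both sides up to order two). I will invoke this as a known consequence of the subGaussian definition (cf.\ \cite{wainwright2019high}).

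Next, I apply the hypothesis of \Cref{ass:subGaussian} with $\boldsymbol{v} = \boldsymbol{e}_i$, the $i$-th standard basis vector, which lies in $\mathbb{S}^{d-1}$. This yields that the centered coordinate $\boldsymbol{e}_i^{\rT}(\boldsymbol{x} - \E[\boldsymbol{x}]) = x_i - \E[x_i]$ is $\sigma$-subGaussian for every $i \in \{1,\dots,d\}$. The scalar fact above then gives $\mathrm{Var}(x_i) \leq \sigma^2$.

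Finally, summing the diagonal of $\Sigma_{\boldsymbol{x}}$:
\begin{equation*}
\mathrm{Var}[\boldsymbol{x}] \;=\; \mathrm{tr}(\Sigma_{\boldsymbol{x}}) \;=\; \sum_{i=1}^{d} \mathrm{Var}(x_i) \;\leq\; d \sigma^2,
\end{equation*}
so the claim holds with $C = 1$. There is no real obstacle here; the only subtlety is fixing the interpretation of $\mathrm{Var}[\boldsymbol{x}]$ for a vector (trace of the covariance), which is consistent with how the lemma is invoked upstream in the proof of \Cref{lem:bound_var_l2norm}. If one instead prefers the operator-norm interpretation $\lrn{\Sigma_{\boldsymbol{x}}}_{\mathrm{op}}$, the same argument via $\boldsymbol{v}^{\rT}\Sigma_{\boldsymbol{x}}\boldsymbol{v} \leq \sigma^2$ for all unit $\boldsymbol{v}$ gives the even stronger bound $\lrn{\Sigma_{\boldsymbol{x}}}_{\mathrm{op}} \leq \sigma^2$, which trivially implies the trace bound up to the factor of $d$.
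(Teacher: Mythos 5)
Your proof is correct and follows essentially the same route as the paper's: both derive the one-dimensional second-moment bound $\E[(\boldsymbol{v}^{\rT}(\boldsymbol{x}-\E[\boldsymbol{x}]))^2]\leq \sigma^2$ from the subGaussian MGF condition and then bound $\mathrm{Var}[\boldsymbol{x}]=\mathrm{tr}(\Sigma_{\boldsymbol{x}})$ by $d\sigma^2$ (the paper passes through the maximum eigenvalue over all unit directions, while you specialize to the coordinate directions and sum the diagonal, which gives the same bound slightly more directly with $C=1$).
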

\begin{proof} [Proof of \cref{lem:subGau_var}]
    By definition of sub-Gaussian vector, for any direction $\boldsymbol{u} \in \R^d$ with $\lrn{\boldsymbol{u}} = 1$,
    \begin{equation*}
        \E\left[\exp(\lambda \boldsymbol{u}^\rT(\boldsymbol{x} - \E[\boldsymbol{x}]))\right] \leq \exp \left( \frac{\lambda^2\sigma^2}{2} \right), ~~\forall \lambda \in \R.
    \end{equation*}
    This implies that the second moment in any direction
    $\boldsymbol{u}$ satisfies:
    \begin{equation*}
        \E\left[\boldsymbol{u}^\rT((\boldsymbol{x} - \E[\boldsymbol{x}])(\boldsymbol{x} - \E[\boldsymbol{x}])^\rT)\right] \leq \sigma^2.
    \end{equation*}
    Therefore, the maximum eigenvalue of the covariance matrix is upper-bounded by $C \sigma^2$, where $C$ is some positive constant.
    \begin{align*}
        \text{Var}[\boldsymbol{x}] 
        = \text{tr}\left(\E\left[ (\boldsymbol{x} - \E[\boldsymbol{x}])(\boldsymbol{x} - \E[\boldsymbol{x}])^\rT \right]\right) \leq C d\sigma^2.
    \end{align*}
\end{proof}

\begin{lemma}
\label{lem:posterior}
        In each iteration $k \in [K]$, let $\mathcal{D}$ be the collected dataset, $\theta$ and $\theta^*$ are parameters independently drawn from posterior $p(\theta | \mathcal{D})$, $\boldsymbol{x}_k \sim p_\theta(\cdot|y_k^*, \mathcal{D})$ and $\boldsymbol{x}^* \sim p_{\theta^*}(\cdot|y_k^*, \mathcal{D})$. For any measurable function $f$, and $\sigma(\mathcal{D})$-measurable random variable $\boldsymbol{x_k}$, 
    \begin{align*}
        \E\left[f(\boldsymbol{x}_k) \right] 
        &= \E\left[f(\boldsymbol{x}^*) \right].
    \end{align*}
\end{lemma}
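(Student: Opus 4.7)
The plan is to reduce the equality to the elementary fact that iid random variables have equal marginal expectations after being pushed through any measurable function. Since $\theta$ and $\theta^*$ are both drawn (independently) from the same posterior $p(\cdot \mid \mathcal{D})$, they are identically distributed; consequently any scalar summary of the distribution $p_\theta(\cdot \mid y_k^*, \mathcal{D})$ has the same expectation whether computed with $\theta$ or $\theta^*$. Independence of $\theta$ and $\theta^*$ is not actually used in the argument, only equality of their marginal laws.

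First I would apply the tower property, conditioning on the parameter, to rewrite each side as an iterated expectation. Define the measurable map
\[
    g(\vartheta) \;:=\; \E_{\boldsymbol{x} \sim p_{\vartheta}(\cdot \mid y_k^*, \mathcal{D})}\!\left[f(\boldsymbol{x})\right] \;=\; \int f(\boldsymbol{x})\, p_{\vartheta}(\boldsymbol{x} \mid y_k^*, \mathcal{D})\, d\boldsymbol{x},
\]
which is a deterministic function of the parameter (and of the fixed conditioning information $y_k^*$ and $\mathcal{D}$). Then by Fubini's theorem (applicable provided $f$ is integrable against the joint law, which is the implicit hypothesis behind the claim),
\[
    \E\!\left[f(\boldsymbol{x}_k)\right] \;=\; \E_{\theta \sim p(\cdot \mid \mathcal{D})}\!\left[g(\theta)\right], \qquad \E\!\left[f(\boldsymbol{x}^*)\right] \;=\; \E_{\theta^* \sim p(\cdot \mid \mathcal{D})}\!\left[g(\theta^*)\right].
\]

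Finally, since $\theta$ and $\theta^*$ are drawn from the same posterior distribution on the parameter space $\Theta$, the pushforward laws of $g$ under them coincide, so $\E[g(\theta)] = \E[g(\theta^*)]$, yielding the claim. Equivalently, both $\boldsymbol{x}_k$ and $\boldsymbol{x}^*$ are marginally distributed according to the posterior predictive distribution
\[
    p(\boldsymbol{x} \mid y_k^*, \mathcal{D}) \;=\; \int p_{\vartheta}(\boldsymbol{x} \mid y_k^*, \mathcal{D})\, p(\vartheta \mid \mathcal{D})\, d\vartheta,
\]
so any integral of $f$ against the marginal law agrees on the two sides.

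There is no serious technical obstacle here; the entire argument is one invocation of Fubini and the trivial observation that identically distributed random variables produce identical expectations of measurable functions. The only point demanding a sentence of care is interpreting the awkward phrase ``$\sigma(\mathcal{D})$-measurable random variable $\boldsymbol{x}_k$'' as the statement that the randomness in $\boldsymbol{x}_k$ has been marginalized against the posterior over $\theta$, i.e., that $\E[f(\boldsymbol{x}_k)]$ denotes the iterated expectation over both $\theta \sim p(\cdot \mid \mathcal{D})$ and $\boldsymbol{x}_k \sim p_\theta(\cdot \mid y_k^*, \mathcal{D})$; with this reading the proof above is complete in two lines.
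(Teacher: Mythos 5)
Your proof is correct and follows essentially the same route as the paper's: condition on the parameter via the tower rule to reduce each expectation to an integral of $\E_{\boldsymbol{x}\sim p_\vartheta(\cdot\mid y_k^*,\mathcal{D})}[f(\boldsymbol{x})]$ against the posterior $p(\vartheta\mid\mathcal{D})$, and observe that the two integrals coincide because $\theta$ and $\theta^*$ share the same marginal law. The paper writes this as an explicit double integral and relabels the integration variable, while you phrase it via the pushforward of $g$, but the argument is identical (and both versions correctly note that independence of $\theta$ and $\theta^*$ is never used).
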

\begin{proof} [Proof of \Cref{lem:posterior}]
    Since the black-box function $f$ is measurable, and by the nature of \Cref{alg:diff_bo}, in each iteration $k$, the generated sample $\boldsymbol{x_k}$, the target function value $y_k^*$, the predictive distribution $p_\theta(\cdot|y_k^*, \mathcal{D})$, the posterior distribution $p(\theta \mid \mathcal{D})$ are $\sigma(\mathcal{D})$-measurable at iteration $k$, the only randomness in $f(\boldsymbol{x})$ comes from the random sampling in the algorithm. Thus, condition on the training data $\mathcal{D}$ and target value $y_k^*$, by tower rule,
    \begin{align*}
        \E\left[ f(\boldsymbol{x}_k) \right]
        = \E\left[\E\left[ f(\boldsymbol{x}_k) | \theta \right] \right] 
        &= \int_\theta \int_{\boldsymbol{x}_k} f(\boldsymbol{x}_k) p_\theta(\boldsymbol{x}_k | y_k^*, \mathcal{D}) p(\theta | \mathcal{D})  \dif \boldsymbol{x}_k \dif \theta \\
        &= \int_\theta \int_{\boldsymbol{x}_k} f(\boldsymbol{x}_k) p_\theta(\boldsymbol{x}_k | y_k^*, \mathcal{D}) \dif \boldsymbol{x}_k ~ p(\theta | \mathcal{D}) \dif \theta. 
    \end{align*}
    Note that both the true parameter $\theta^*$ and the chosen parameter $\theta$ are drawn from the same posterior distribution $p(\theta \mid \mathcal{D})$, we have
    \begin{align*}
       \int_\theta \int_{\boldsymbol{x}} f(\boldsymbol{x}) p_\theta(\boldsymbol{x} | y_k^*, \mathcal{D}) \dif \boldsymbol{x} ~ p(\theta | \mathcal{D}) \dif \theta
       = \int_{\theta^*} \int_{\boldsymbol{x}} f(\boldsymbol{x}) p_{\theta^*}(\boldsymbol{x} | y_k^*, \mathcal{D}) \dif \boldsymbol{x} ~ p({\theta^*} | \mathcal{D}) \dif {\theta^*}.
    \end{align*}
    As a result, we have
    \begin{align*}
        \E\left[ f(\boldsymbol{x}_k) \right] 
        &= \int_{\theta^*} \int_{\boldsymbol{x}^*} f(\boldsymbol{x}^*) p_{\theta^*}(\boldsymbol{x}^* | y_k^*, \mathcal{D}) \dif \boldsymbol{x}^*  p({\theta^*} | \mathcal{D}) \dif {\theta^*} 
        =\E\left[\E\left[ f(\boldsymbol{x}^*) | \theta^* \right] \right] 
        = \E\left[ f(\boldsymbol{x}^*) \right].
    \end{align*}
\end{proof}

\begin{corollary}
\label{cor:posterior_norm}
        In each iteration $k \in [K]$, let $\mathcal{D}$ be the collected dataset, $\theta$ and $\theta^*$ are parameters independently drawn from posterior $p(\theta | \mathcal{D})$, $\boldsymbol{x}_k \sim p_\theta(\cdot|y_k^*, \mathcal{D})$ and $\boldsymbol{x}^* \sim p_{\theta^*}(\cdot|y_k^*, \mathcal{D})$. For any measurable function $f$, and $\sigma(\mathcal{D})$-measurable random variable $\boldsymbol{x_k}$, 
    \begin{align*}
        \E\left[\lrn{\boldsymbol{x}_k} \right] 
        &= \E\left[\lrn{\boldsymbol{x}^*} \right].
    \end{align*}
\end{corollary}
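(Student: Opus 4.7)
The plan is to derive this corollary as an immediate specialization of the preceding \Cref{lem:posterior}. The statement of the lemma quantifies over \emph{any} measurable function $f$, so the task reduces to identifying an appropriate candidate and verifying its measurability.

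Concretely, I would set $f(\boldsymbol{x}) := \lrn{\boldsymbol{x}}$, the Euclidean norm on $\R^d$. Since the norm is continuous, it is Borel measurable, and hence a valid choice for the $f$ appearing in \Cref{lem:posterior}. All other hypotheses carry over unchanged: $\theta$ and $\theta^*$ are independently drawn from the same posterior $p(\theta \mid \mathcal{D})$; the generated samples $\boldsymbol{x}_k \sim p_\theta(\cdot \mid y_k^*, \mathcal{D})$ and $\boldsymbol{x}^* \sim p_{\theta^*}(\cdot \mid y_k^*, \mathcal{D})$ are drawn from predictive distributions that share the same functional form; and $\boldsymbol{x}_k$ is $\sigma(\mathcal{D})$-measurable in the same sense as in the lemma. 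Applying \Cref{lem:posterior} with this choice of $f$ then yields
\begin{equation*}
    \E\left[\lrn{\boldsymbol{x}_k}\right] = \E\left[\lrn{\boldsymbol{x}^*}\right],
\end{equation*}
which is exactly the claim.

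There is essentially no obstacle: the proof is a one-line invocation, and its only substantive content is the observation that the norm is measurable so that the generic lemma applies. If one wished to make the derivation self-contained, one could alternatively reprise the two-step tower-rule argument from the proof of \Cref{lem:posterior} with $f$ replaced by $\lrn{\cdot}$ and use the fact that $\theta, \theta^* \stackrel{d}{=}$ under the common posterior $p(\theta \mid \mathcal{D})$ to swap the dummy variables of integration, but this merely reproduces the lemma and is unnecessary given that the lemma is already available. I would therefore present the corollary as a single sentence invoking \Cref{lem:posterior} with $f(\boldsymbol{x}) = \lrn{\boldsymbol{x}}$.
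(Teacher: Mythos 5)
Your proof is correct and matches the paper's own argument: the paper likewise disposes of the corollary by noting that the norm is a deterministic, measurable function and that the conclusion "directly follows" from \Cref{lem:posterior}. Substituting $f(\boldsymbol{x}) = \lrn{\boldsymbol{x}}$ into the lemma is exactly the intended one-line derivation.
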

\begin{proof} [Proof of \Cref{cor:posterior_norm}]
    Since the norm function is deterministic and $\sigma(\mathcal{D})$-measurable, the proof directly follows that of \Cref{lem:posterior}.
\end{proof}

\section{Optimality of Proposed Acquisition Function}
\label{sec:app_optimization}
\optiEqn*

\begin{proof} [Proof of \Cref{thm:optimization_equivalence}]
    Following \Cref{thm:sub_optimality_gap}, we can express the function evaluation as follows,
\begin{align*}
     f(\boldsymbol{x}_k)
    = y_k - (y_k - f(\boldsymbol{x}_k)), \forall k \in [K].
\end{align*}

The overall objective of the optimization problem defined in \Cref{eqn:optimization_objective} can then be further decomposed as 
\begin{align*}
    &~~~~~\max_{y_k \in \R} \sum_{k=1}^K f(\boldsymbol{x}_k), ~~ \boldsymbol{x}_k \sim p_{\theta}(\cdot \mid y_k),~~\theta \in \Theta \\
    &\Leftrightarrow \max_{y_k \in \R} \sum_{k=1}^K y_k - (y_k - f(\boldsymbol{x}_k)), ~~ \boldsymbol{x}_k \sim p_{\theta}(\cdot \mid y_k),~~\theta \in \Theta \\
    &\Rightarrow \max_{y_k \in \mathcal{Y}} \sum_{k=1}^K y_k - \Delta(p_\theta, y_k),
\end{align*}
where the candidate set $\mathcal{Y}$ is constructed based on the training data and is designed to explore the objective space efficiently. When considering an online maximization problem, adding a positive term would lead to overestimation, because the model would be overly optimistic about $f(\boldsymbol{x}_k)$. Therefore, we should only consider the case where we the uncertainty is being subtracted. By \Cref{thm:sub_optimality_gap}, which shows $\Delta(p_\theta, y_k^*)$ can be effectively upper bounded the epidemic uncertainty, we therefore have
\begin{align*}
    \max_{y_k \in \R} \sum_{k=1}^K f(\boldsymbol{x}_k), ~~ \boldsymbol{x}_k \sim p_{\theta}(\cdot \mid y_k),~~\theta \in \Theta
    &\Rightarrow \max_{y_k \in \mathcal{Y}} \sum_{k=1}^K y_k - \Delta_{\mathrm{episdemic}}(y_k, \mathcal{D}).
\end{align*}
    Essentially, our chosen acquisition function allows \ours{} to maximize the lower bound of the original optimization problem. Penalizing high uncertainty ensures that the model prioritizes more confident predictions (i.e. those with lower epistemic uncertainty), which are more likely to yield higher objective function values.
\end{proof}

\section{Experiment Details}
\label{sec:app_experiment}
\subsection{Dataset Details.} \label{sec:app_dataset}\textbf{DesignBench} \citep{trabucco2022design} is a benchmark for real-world black-box optimization tasks. For continuouse tasks, we use Superconductor, D'Kitty Morphology and Ant Morphology benchmarks. For discrete tasks, we utilize TFBind8 and TFBind10 benchmarks. We exclude Hopper due to the domain is known to be buggy, as explained in Appendix C in \citep{krishnamoorthy2023diffusion}. We also exclude NAS due to the significant computational resource requirement. Additionally, we exclude the ChEMBL task because the oracle model exhibits non-trivial discrepancies when queried with the same design.
\begin{itemize}
    \item \textbf{Superconductor (materials optimization).} This task involves searching for materials with high critical temperatures. The dataset comprises 17,014 vectors, each with 86 components that represent the number of atoms of each chemical element in the formula. The provided oracle function is a pre-trained random forest regression model.
    
    \item \textbf{D’Kitty Morphology (robot morphology optimization).} This task focuses on optimizing the parameters of a D’Kitty robot, including the size, orientation, and location of the limbs, to make it suitable for a specific navigation task. The dataset consists of 10,004 entries with a parameter dimension of 56. It utilizes MuJoCO \citep{todorov2012mujoco}, a robot simulator, as the oracle function.
    
    \item \textbf{Ant Morphology (robot morphology optimization).} Similar to D’Kitty, this task aims to optimize the parameters of a quadruped robot to maximize its speed. It includes 10,004 data points with a parameter dimension of 60. It also uses MuJoCO as the oracle function.
    
    \item \textbf{TFBind8 (DNA sequence optimization).} This task seeks to identify the DNA sequence of length eight with the highest binding affinity to the transcription factor SIX6 REF R1. The design space comprises sequences of nucleotides represented as categorical variables. The dataset size is 32,898, with a dimension of 8. The ground truth is used as a direct oracle since the affinity for the entire design space is available.
    
    \item \textbf{TFBind10 (DNA sequence optimization).} Similar to TFBind8, this task aims to find the DNA sequence of length ten that exhibits the highest binding affinity with transcription factor SIX6 REF R1. The design space consists of all possible nucleotide sequences. The dataset size is 10,000, with a dimension of 10. The ground truth is used as a direct oracle since the affinity for the entire design space is available.
\end{itemize}

\paragraph{Molecular Discovery.} A key problem in drug discovery is the optimization of a compound's activity against a biological target with therapeutic value. Similar to other papers \citep{eckmann2022limo, jeon2020autonomous, lee2023exploring, noh2022path}, we attempt to optimize the score from AutoDock4 \citep{morris2009autodock4}, which is a physics-based estimator of binding affinity. The oracle is a feed-forward model as a surrogate to AutoDock4. The surrogate model is trained until convergence on 10,000 compounds randomly sampled from the latent space (using $\mathcal{N}(0, 1)$) and their computed objective values with AutoDock4. We construct our continuous design space by fixing a random protein embedding and randomly sampling 10,000 molecular embedding of dimension $32$. 

For all the tasks, we sort the offline dataset based on the objective values and select data from the $25\%$ to $50\%$ as the initial training dataset. We use data with lower objective scores to better observe performance differences across each baseline.
The overview of all the task statistics is provided in \Cref{tab:task_data}.

\begin{table}[h!]
    \centering
    \begin{tabular}{lrrr}
        \toprule
        \textbf{Task} & \textbf{Size} & \textbf{Dimensions} & \textbf{Task Max} \\
        \midrule
        TFBind8 & 32,898 & 8 & 1.0 \\
        TFBind10 & 10,000 & 10 & 2.128 \\
        D’Kitty & 10,004 & 56 & 340.0 \\
        Ant & 10,004 & 60 & 590.0 \\
        Superconductor & 17,014 & 86 & 185.0 \\
        Molecular Discovery & 10,000 & 32 & 1.0 \\
        \bottomrule
    \end{tabular}
        \caption{Data Statistics}
    \label{tab:task_data}
\end{table}

\begin{figure}[h]
    \centering
    \includegraphics[width=0.5\linewidth]{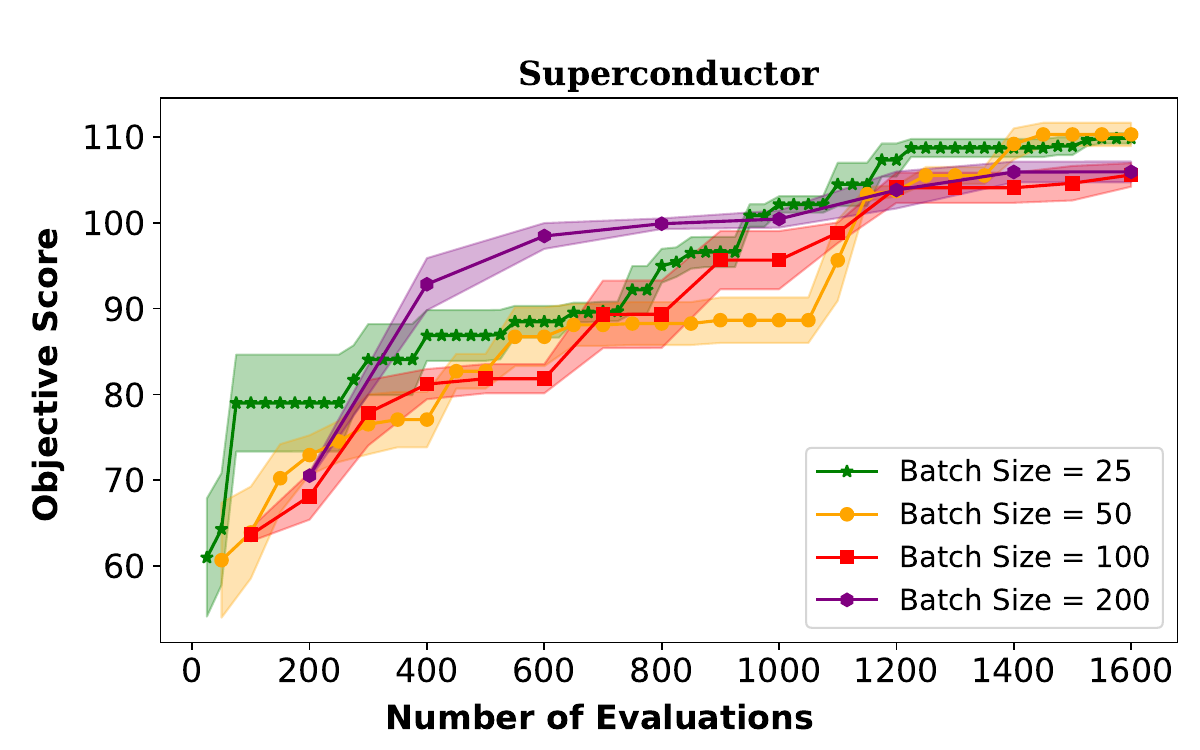}
    \caption{Ablation study to evaluate the effect of batch size on the superconductor task. The mean and standard deviation across three random seeds are plotted. \ours{} shows robust performances across different batch size given the same total number of evaluations.}
    \label{fig:ablation2}
\end{figure}

\subsection{Implementation Details.} 
We train our model on one NVIDIA A100 GPU and report the average performance over 3 random runs along with standard deviation for each task. For discrete tasks, we follow the procedure in \cite{krishnamoorthy2023diffusion} where we convert the $d$-dimensional vector to a $d \times c$ one hot vector regarding $c$ classes. We then approximate logits by interpolating between a uniform distribution and the one hot distribution using a mixing factor of $0.6$. We jointly train a conditional and unconditional model with the same model by randomly set the conditioning value to 0 with dropout probability of $0.15$.

For each task, we fix the learning rate at $0.001$ with batch size of $256$. We use $5$ ensemble models to estimate the uncertainty for our acquisition function. We set hidden dimensions to $1024$ and gamma to $2$. We use $10\%$ of the available data at each iteration as validation set during training.

\section{Ablation Study}
\label{section:ablation_appendices}

We evaluate the effect of batch size, i.e., the number of queries per iteration on \ours{} on the Superconductor task. As shown in \Cref{fig:ablation2}, we compare the objective function score over number of function evaluations. We can see the performance of our approach remains similar when the batch size becomes larger, suggesting remarkable robustness across different batch sizes. Hence, \ours{} is a highly-scalable inverse modeling approach that can efficiently leverage parallelism to handle larger computational loads without compromising performance.

\end{document}